\newtheorem{assumption}{Assumption}
\newtheorem{definition}{Definition}
\newtheorem{lemma}{Lemma}
\newtheorem{theorem}{Theorem}
\begin{document}
%
\title{Achieving Model Fairness in Vertical Federated Learning}
%
%
%
%

\author{
\IEEEauthorblockN{
    Changxin Liu\IEEEauthorrefmark{1}, 
    Zhenan Fan\IEEEauthorrefmark{2},
    Zirui Zhou\IEEEauthorrefmark{3},
    Yang Shi\IEEEauthorrefmark{1},
    Jian Pei\IEEEauthorrefmark{4},
    Lingyang Chu\IEEEauthorrefmark{5} and
    Yong Zhang\IEEEauthorrefmark{3}}\\
\IEEEauthorblockA{
\IEEEauthorrefmark{1} University of Victoria, \texttt{changxin@kth.se, yshi@uvic.ca}\\
\IEEEauthorrefmark{2} University of British Columbia, \texttt{zhenanf@cs.ubc.ca} \\
\IEEEauthorrefmark{3} Huawei Technologies Canada, \texttt{ \{zirui.zhou, yong.zhang3\}@huawei.com} \\
\IEEEauthorrefmark{4} Simon Fraser University, \texttt{ jpei@cs.sfu.ca}\\
\IEEEauthorrefmark{5} McMaster University, \texttt{chul9@mcmaster.ca}
}
\thanks{Changxin Liu and Zhenan Fan contributed equally to this paper.}
}

\IEEEtitleabstractindextext{%
\begin{abstract}
Vertical federated learning (VFL) has attracted greater and greater interest since it enables multiple parties possessing non-overlapping features to strengthen their machine learning models without disclosing their private data and model parameters. Similar to other machine learning algorithms, VFL faces demands and challenges of fairness, i.e., the learned model may be unfairly discriminatory over some groups with sensitive attributes.
To tackle this problem, we propose a fair VFL framework in this work.
First, we systematically formulate the problem of training fair models in VFL, where the learning task is modelled as a constrained optimization problem. 
To solve it in a federated and privacy-preserving manner, we consider the equivalent dual form of the problem and develop an asynchronous gradient coordinate-descent ascent algorithm, where some active data parties perform multiple parallelized local updates per communication round to effectively reduce the number of communication rounds. The messages that the server sends to passive parties are deliberately designed such that the information necessary for local updates is released without intruding on the privacy of data and sensitive attributes. We rigorously study the convergence of the algorithm when applied to general nonconvex-concave min-max problems. We prove that the algorithm finds a $\delta$-stationary point of the dual objective in $\mathcal{O}(\delta^{-4})$ communication rounds under mild conditions. Finally, the extensive experiments on three benchmark datasets demonstrate the superior performance of our method in training fair models.
\end{abstract}

\begin{IEEEkeywords}
vertical federated learning, fairness, min-max optimization
\end{IEEEkeywords}}

\maketitle

\IEEEdisplaynontitleabstractindextext

\section{Introduction}

Federated learning has emerged as a powerful paradigm, where a trustworthy server and multiple organizations collaboratively train a machine learning model for superior performance without intruding the data privacy of any parties~\cite{mcmahan2017communication}. In many practical cases, such as e-commerce, financial and healthcare applications~\cite{hardy2017private,sun2019privacy}, the involved organizations have data about an identical set of subjects but on various attributes. In other words, for every subject, each organization possesses a disjoint partition of the feature vector. Federated learning in such a framework is known as \emph{vertical federated learning} (VFL) and has received increasing attention recently in both academia and industry~\cite{yang2019federated}. 

As a motivating example, suppose a bank initiates a VFL task to train a prediction model for credit score evaluation with an e-commerce company and a social network company.
In this task, the sets of users in these institutions are considered identical but the feature spaces are different.
Particularly, for a set of users,
their revenue and credit rating from the bank, their browsing and purchasing history available in the e-commerce, and their interactions with advertisements recorded by the social network company are collectively used to train the model. 

Similar to other automated decision-making systems, VFL may discriminate the people with certain sensitive attributes (e.g., females, blacks) due to, among other reasons, biased datasets~\cite{caton2020fairness}. 
As preventing sensitive information (e.g., gender, race) from influencing the automated decision-making system ``unfairly'' is crucial for social good, algorithmic fairness has received surging interest in the machine learning community lately~\cite{pessach2020algorithmic}.

It is highly desirable to improve the algorithmic fairness in VFL.
However, designing fair VFL algorithms is challenging due to two characteristics of VFL~\cite{liu2019communication,zhang2021secure}.
First, the data privacy of all the organizations should be fully protected to secure successful collaborations.
This conflicts with the need of a unified training dataset to measure and establish fairness in most of the existing fairness enhancing methods (more discussion in Section~\ref{sec:fair-vfl}). 
Second, the participating organizations in realistic VFL systems typically have imbalanced computational resources and complete their local updates within different time frames.
When training a fair model in VFL, enforcing every organization to launch a single local update per communication round results in inefficiency~\cite{liu2019communication,zhang2021secure}. 

To tackle the aforementioned challenges, in this paper, we develop a comprehensive VFL scheme that achieves a good balance between fairness and accuracy. The key idea is to solve an optimization problem under nonconvex fairness constraints in a distributed manner, for which the local computations are fully parallelized and the communication protocol is privacy-preserving.

We make several major contributions.
First, we systematically formulate the problem of training fair models in VFL (\textit{fair VFL} for short), where the fair learning task is modeled as a nonconvex constrained optimization problem.
To solve it in a federated manner, we consider its equivalent dual form and propose an asynchronous gradient coordinate-descent ascent algorithm to solve the dual problem. 
In the algorithm, some active data parties update local model parameters multiple times in parallel before exchanging information with the server to reduce the number of communication rounds. Moreover, the server masks the necessary information for local updates, and sends the masked version to passive parties to facilitate their local computations while preserving the privacy of data and sensitive attribute.
Under mild conditions, we prove that the algorithm can achieve an $\mathcal{O}(\delta^{-4})$ communication complexity for solving the fair VFL problem.
Finally, we conduct comprehensive experiments to validate the superior performance of the proposed method in training fair models on three benchmark real-world datasets.

The rest of the paper is organized as follows. We review related works in Section~\ref{sec:literature}. We formulate the problem in Section~\ref{sec:problem_formulation}, and develop our solution in Section~\ref{sec:approach}. We present the experimental results in Section~\ref{sec:experiments}, and conclude the paper in Section~\ref{sec:conclusion}.

\section{Related Works}
\label{sec:literature}
In this section, we provide a survey of related works on algorithmic fairness and VFL, followed by a discussion about building fair models within the federated learning framework.

\subsection{Algorithmic Fairness in Machine Learning}
\label{subsec:fairness}
There are two well-developed mathematical definitions for model fairness. First, the {statistical} (or demographic) {parity} refers to the property that the demographics of those receiving a certain outcome should be identical to the demographics of the population overall~\cite{calders2009building}.
Second, equal opportunity (equalized odds) requires that the true positive (and false positive) rates across the demographics be identical~\cite{hardt2016equality}. Recently, a generalization of equal opportunity, a difference of equal opportunities (DEO), is reported in~\cite{donini2018empirical}.

Based on the above definitions for fairness, the existing methods that promote fairness in machine learning can be roughly categorized into three groups, namely, pre-processing methods, post-processing methods, and in-processing methods.

In pre-processing methods~\cite{calmon2017optimized,feldman2015certifying,kamiran2012data,luong2011k}, the training data is refined for fairness reasons. For example, Kamiran and Calders~\cite{kamiran2012data} discussed three pre-processing strategies, namely, massaging, reweighing, and sampling, that promote fairness more efficiently than simply removing the protected attribute from the dataset. A similar method that removes fairness-sensitive features in advance was proposed by Luong et al.~\cite{luong2011k}.
To apply pre-processing methods, all the training data have to be collected in advance for pre-processing.  This may not be achievable in federated learning for privacy reasons.

The post-processing methods re-calibrate a learned model based on its prediction scores within the demographics for fairer predictions~\cite{corbett2017algorithmic,dwork2018decoupled,hardt2016equality,menon2018cost,pleiss2017fairness}. In particular, Hardt et al.~\cite{hardt2016equality} developed a constrained optimization problem for post-processing, where the prediction accuracy can be retained while adjusting the prediction scores within the demographics to enhance fairness. Corbett-Davies~\cite{corbett2017algorithmic} investigated the trade-off between the accuracy and fairness of the model for post-processing methods.
However, since accuracy and fairness requirements are not addressed simultaneously, they typically cannot be balanced well in those methods.

In-processing methods refer to those tailored strategies that explicitly take fairness into account in the training process~\cite{agarwal2018reductions,agarwal2019fair,donini2018empirical,kamiran2010discrimination,kamishima2012fairness,quadrianto2019discovering}. To build fair classifiers directly, Agarwal et al.~\cite{agarwal2018reductions} constructed a linearly constrained optimization problem where the fairness requirement is modeled as inequality constraints. Then, a min-max optimization algorithm is developed to efficiently solve the problem. Alternatively, Quadrianto et al.~\cite{quadrianto2019discovering} formulated an unconstrained optimization problem to learn a representation that still possesses the semantic meaning of the input but is independent of the protected attribute. Most of the existing in-processing algorithms need a unified training dataset and run in a centralized way. Therefore, they are not applicable to the federated learning setting due to both privacy and communication-efficiency concerns.

\subsection{Fair Models and VFL}\label{sec:fair-vfl}
After the seminal work on federated learning~\cite{mcmahan2017communication}, considerable efforts have been dedicated to developing federated learning methods (see~\cite{kairouz2019advances,yang2019federated} for surveys). In this subsection, we focus on VFL and those attempts to train fair models in federated learning.

The standard VFL methods are mostly designed to facilitate privacy preservation or/and efficient parallelized computation~\cite{gascon2016secure,gong2016private,zhang2018feature,hu2019fdml,liu2019communication,chen2020vafl,zhang2021secure}. 
However, none of them considers model fairness, and they cannot be easily extended to train fair models because privacy preservation essentially prohibits collecting the overall dataset, which is a requirement in existing fairness enhancing methods. 
Furthermore, the need for asynchronous parallelized updates in practical VFL setups makes model fairness even more challenging to enhance.

The results on fair federated training are rarely reported. Indeed, only a few existing methods that combine horizontal federated learning (HFL) and fairness are proposed~\cite{mohri2019agnostic,du2021fairness}. Mohri et al.~\cite{mohri2019agnostic} developed the so-called agnostic loss: the maximum of losses defined over an unknown mixture of data distributions.
Upon minimizing the agnostic loss during training, a certain degree of fairness can be expected as a by-product. Following this line, Du et al.~\cite{du2021fairness} addressed the federated fair model training problem explicitly by incorporating an agnostic fairness constraint into the optimization problem for model training. These two works both assumed a horizontally partitioned dataset and cannot be easily extended to VFL.

\section{Problem Formulation}
\label{sec:problem_formulation}
In this section, we review the formal setting of VFL and then formulate the problem of training fair models within VFL.

\subsection{Basic Setup of VFL} \label{sec:setup}
{Consider the following standard VFL scenario: $K$ data parties and a server collaboratively train a machine learning model on $n$ data samples $\{(X_i \in \mathbb{R}^m, y_i \in \{\pm 1\})\}_{i=1}^n$, where $X_i$ is a feature vector and $y_i$ is a label. In VFL, every feature vector $X_i$ is distributed across $K$ data parties, i.e., $X_i = [X_i^1, \dots, X_i^K]$ with $X_i^k \in \mathbb{R}^{m_k}$. We introduce what information each data party has in Section~\ref{sec:fair_model}. }

{
For the theoretical purpose, here we consider the machine learning model in a linear form, i.e., 
\[\sum_{k=1}^K \theta_k^T X_i^k,\]
 where 
 \begin{equation*}
	{ \theta}=[\theta_{1};\ldots;\theta_{K}]
\end{equation*}
and each $\theta_{k} \in\mathbb{R}^{m_k}$ is the block of coordinates associated with data party $k$. In practice, our method also works for more general machine learning models, i.e., 
\[\sum_{k=1}^K g_k(\theta_k, X_i^k),\]
where $g_k$ can be a neural network with weights $\theta_k$. We empirically verify this in Section~\ref{sec:experiments}.
}

To train the model, we consider the following loss function
\begin{equation}\label{def:loss}
	L({ \theta} ) = \frac{1}{n}\sum_{i=1}^{n} l \left( \sum_{k=1}^K \theta_k^T X_i^k, y_i \right)  +\sum_{k=1}^{K}h_k(\theta_{k})  
\end{equation}
where $l:\mathbb{R}\times \mathbb{R}\rightarrow \mathbb{R}$ denotes the loss function, and $h_k:\mathbb{R}^{m_k}\rightarrow \mathbb{R}$ represents the regularizer. The loss function in Equation~\eqref{def:loss} is standard in VFL~\cite{liu2019communication} and has also been used for multi-class classification~\cite{zhang2019asyspa} and regression tasks~\cite{komiyama2018nonconvex} in the literature.

\subsection{Fair Model Training in VFL} \label{sec:fair_model}
We aim to enhance the fairness of the model with respect to a 
\textbf{protected group}, e.g., ``female'' or ``male'', in VFL.
Let $s_i\in\{a,b\}$ represent the membership of $X_i$ between a pair of protected groups.
Based on the samples having a positive label,
we present the definition of difference of equal opportunities (DEO) of the model with respect to the protect group.

\begin{definition}[DEO~\cite{donini2018empirical}] \label{DEO}
	Let
	$$
	\hat{l}^s({ \theta}) = \frac{1}{\lvert \mathcal{N}^{s}\rvert} \sum_{i\in\mathcal{N}^{s}}l \left( {X}_i^T \theta, y_i \right), 
	$$
	where $\mathcal{N}^{s}$ is the set of indexes in which the samples belong to the protected class $s\in\{a,b\}$ and have a positive label.
	The difference of equal opportunities (DEO) of the model is captured by the absolute difference between $ \hat{l}^{a}({ \theta} )$ and $ \hat{l}^{b}({ \theta} )$, i.e.,
 \begin{equation} \label{eq:DEO}
     \mathop{DEO}(\theta) = \lvert \hat{l}^{a}({ \theta} )-\hat{l}^{b}({ \theta} ) \rvert.
 \end{equation}
\end{definition}

A smaller DEO requires the loss function values  associated with the two protected groups to be closer to each other, which further indicates the better fairness of the model with respect to the protected group.
Based on this notion, the following {DEO constraint} is developed by Donini et al.~\cite{donini2018empirical}:
\begin{equation}
	\label{def:fairness_constraint}
	\mathop{DEO}(\theta) \leq \varepsilon,
\end{equation}
where $\varepsilon\geq 0$ represents the maximum unfairness one can tolerate. When Equation~\eqref{def:fairness_constraint} is satisfied by a machine learning model, the model is said to be $\varepsilon$-{fair}.

{
Now we formulate the task of training fair models in VFL. In this task, a group of $K$ parties, under the coordination of a trustworthy {cloud server}, aim to collaboratively train a fair and accurate model without intruding on the data privacy of any party.
We assume two types of data parties, \textit{active party} and \textit{passive party}, where the former refers to those who initiate the task and have the information about the labels, the sensitive attributes, and the loss function; while the latter does not. More specifically, for an active data party $k$, it has the following information 
\[\{X_i^k, y_i, s_i\}_{i=1}^n.\]
For a passive data party $k$, it has the following information
\[\{X_i^k\}_{i=1}^n.\]
The server is assumed to have access to the labels and the sensitive attributes, i.e., 
\[\{y_i, s_i\}_{i=1}^n.\]
Note that we introduce the server for clarity. The server is responsible for updating the dual variables. It can be safely replaced by any active data party without affecting the theoretical results.}

\begin{definition}[Fair VFL Task] 
\label{def:fairVFL}
Given a protected group $\{a,b\}$ and a threshold $\varepsilon>0$ characterizing the maximum unfairness that can be tolerated. The fair VFL task is that a group of $K$ data parties with vertically partitioned data train a machine learning model under the coordination of a trustworthy cloud server such that: i) the data and model parameters of each party are not exposed to any third party and any other data parties, and 
ii) the model is $\varepsilon$-fair.
\end{definition}

Similar to the existing work~\cite{donini2018empirical}, we formulate the above task as a constrained optimization problem
\begin{equation} \label{constrained_VFL}
    \min_{{ \theta}} \quad	L({ \theta}) \enspace\text{s.t.}\enspace \mathop{DEO}(\theta) \leq \epsilon,
\end{equation}
where $L(\theta)$ is the loss function defined in Equation~\eqref{def:loss} and the DEO function is defined in Equation \eqref{eq:DEO}.

\section{Proposed Approach}
\label{sec:approach}
In this section, we solve the fair VFL problem. First, we convert the optimization problem in Equation~\eqref{constrained_VFL} into a nonconvex-concave min-max problem by following Lagrangian relaxation~\cite[Chapter 11]{luenberger1984linear}. Then, we develop an asynchronous gradient coordinate-descent ascent algorithm to solve it without infringing the data privacy of any parties.

\subsection{Lagrangian Relaxation}
Define
\begin{equation*}
{D}({ \theta})  := \hat{l}^{a}({ \theta})-\hat{l}^{b} ({ \theta}).
\end{equation*} 
The fairness constraint in the fair VFL problem can be rewritten as
\begin{subequations}
\begin{align}
	{D}({ \theta})-  \varepsilon &\leq 0 \label{1st_constraint}
	\\	-{D}({ \theta})-	\varepsilon& \leq 	 0. \label{2nd_constraint}
\end{align}
\end{subequations}
We consider the Lagrangian of the constrained problem in Equation~\eqref{constrained_VFL}:
\begin{equation}\label{Lagrangian}
\begin{split}
	f({ \theta},{ \lambda})= {L}({ \theta})+ \lambda_1\left({D}({ \theta})-\varepsilon\right) -\lambda_2 \left(   {D}({ \theta})+\varepsilon\right),
\end{split}
\end{equation}
where $\lambda = [\lambda_1\geq 0; \lambda_2\geq 0]$ are the dual variables associated with the inequality constraints. 
Given a general nonlinear loss function $l$, the Lagrangian is  nonconvex with respect to $\theta$ and concave with respect to $\lambda$. 
Using Equation~\eqref{Lagrangian}, the fair VFL problem can be equivalently transformed to a min-max optimization problem
\begin{equation}\label{min-max}
\min_{{ \theta}}\max_{{ \lambda\in\mathbb{R}_+^2}}f({ \theta},{ \lambda}).
\end{equation}
{Note that the equivalence between problem \eqref{constrained_VFL} and problem \eqref{Lagrangian} is widely known in the literature; see \cite[Proposition~4.3.4]{bertsekas1997nonlinear}. }

\subsection{Asynchronous Min-max Optimization Algorithm}
In this subsection, we develop an asynchronous gradient coordinate-descent ascent algorithm for solving Equation~\eqref{min-max}.

Following~\cite{xu2020unified}, we consider a regularized version of $f$, i.e.,
\begin{equation*}
\tilde{f}_t({ \theta},{ \lambda})=f({ \theta},{ \lambda})-\frac{c_{t}}{2} \lVert \lambda \rVert^2,
\end{equation*}
to speed up the convergence of the algorithm,
where the regularization term $\frac{c_{t}}{2} \lVert  \lambda \rVert^2$ with monotonically decreasing non-negative parameter $\{c_t\}_{t\geq 0}$ 
renders $\tilde{f}_t$ strongly concave with respect to $\lambda$ with modulus ${c_{t}}$. 
The partial gradients of $\tilde{f}_t$ with respect to $\lambda_1$, $\lambda_2$, and $\theta_k, k=1,\cdots, K$ are derived, respectively, as
\begin{equation}\label{def:grad-lambda}
\begin{split}
	\nabla_{\lambda_1}\tilde{f}_t({ \theta},{ \lambda})&=		-c_t \lambda_1+{D}({ \theta}) -\varepsilon \\
	\nabla_{\lambda_2}\tilde{f}_t({ \theta},{ \lambda})&=		-c_t \lambda_2-	{D}({ \theta})-\varepsilon
\end{split}
\end{equation}
and 
\begin{equation}\label{def:partial-grad}
\begin{split}
	\nabla_{k}\tilde{f}_t({ \theta},{ \lambda})=\nabla_{k} {L}({ \theta}) +(\lambda_1-\lambda_2)\nabla_{k} {D}({ \theta})
\end{split}
\end{equation}

To solve Equation~\eqref{min-max} in a distributed manner, each data party $k$ and the server update $\theta_k$ and $\lambda$, respectively.
Notably, computing Equations~\eqref{def:grad-lambda} and~\eqref{def:partial-grad} requires the full information about $\theta$. 
To facilitate local updates, each party sends $\{(X_i)_k^{T}\theta_k^{(t)}\}_{i=1}^n$ to the server at each time $t$, who then calculates $\{X_i^{T}\theta^{(t)}\}_{i=1}^n$ and Equation~\eqref{def:grad-lambda}. 
Based on them, the server performs one projected gradient ascent step to update $\lambda$ by
\begin{equation}\label{dual_update}
\lambda^{(t)} = \left[\lambda^{(t-1)} + \beta \nabla_{{ \lambda}} 
\tilde{f}_{t-1}(\theta^{(t)},\lambda^{(t-1)} )\right]_+
\end{equation}
where $[\cdot]_+$ represents the projection onto the nonnegative orthant.
Then, the server sends $\lambda^{(t)}$ and the other necessary information, specified for active and passive parties in the following, respectively, 
back to the data parties to facilitate their local updates. In particular, since the active parties have the knowledge of labels, protected groups, and loss function, they only require $\lambda^{(t)}$ and $\{X_i^{T}\theta^{(t)}\}_{i=1}^n$ from the server to compute 
\begin{equation}\label{primal_update}
	\theta_k^{(t+1)} = \theta_{ k}^{(t)} - \frac{1}{\eta_t} 	\nabla_{k}\tilde{f}_t({ \theta}^{(t)},{ \lambda^{(t)}}).
\end{equation} 
Since the passive parties do not have the label and sensitive attribute information, they need more information from the server to compute Equation~\eqref{def:partial-grad}.
Consider the following expression for $\nabla_{k}\tilde{f}_t({ \theta},{ \lambda})$:
\begin{equation*}
\begin{split}
    &\nabla_{k}\tilde{f}_t({ \theta},{ \lambda}) \\
    &=  \frac{1}{n}\sum_{i=1}^n \frac{\partial l(X_i^T\theta, y_i)}{\partial(X_i^T\theta)} (X_i)_k  + \sum_{i\in\mathcal{N}^a} {\frac{\lambda_1 -\lambda_2}{\lvert \mathcal{N}^a \rvert} \frac{\partial l(X_i^T\theta, y_i)}{\partial(X_i^T\theta)} }(X_i)_k \\
    & \quad \quad \quad \quad \quad \quad -\sum_{i\in\mathcal{N}^b}{\frac{\lambda_1 -\lambda_2}{\lvert \mathcal{N}^b \rvert} \frac{\partial l(X_i^T\theta, y_i)}{\partial(X_i^T\theta)}}(X_i)_k +  \nabla h_k(\theta_k) \\
     &= \sum_{i=1}^n \nu_i (X_i)_k +  \nabla h_k(\theta_k)
    \end{split}
\end{equation*}
where
	\begin{equation}\label{nu_i}
	\nu_i = \left\{
	\begin{aligned}
		&\left(\frac{1}{n}+\frac{\lambda_1 -\lambda_2}{\lvert \mathcal{N}^a \rvert}\right) \frac{\partial l(X_i^T\theta, y_i)}{\partial(X_i^T\theta )}, \quad i\in\mathcal{N}^a \\
		&\left(\frac{1}{n}-\frac{\lambda_1 -\lambda_2 }{\lvert \mathcal{N}^b \rvert}\right) \frac{\partial l(X_i^T\theta, y_i)}{\partial(X_i^T\theta )}, \quad  i\in\mathcal{N}^b \\
		& \frac{1}{n}\cdot \frac{\partial l(X_i^T\theta, y_i)}{\partial(X_i^T\theta )}, \quad  i\notin \mathcal{N}^a \cup \mathcal{N}^b.
	\end{aligned}
	\right.
\end{equation}
Thus, it is adequate for the server to send $\{\nu_i^{(t)}\}_{i=1}^n$ to the passive parties. 
With them, the passive parties 
are able to update their models according to Equation~\eqref{primal_update}. 
By doing so, we resolve the dilemma of whether the information about the labels and the protected groups should be sent to the passive parties to compute Equation~\eqref{def:partial-grad}. We investigate in Theorem~\ref{thm:security} that disclosing $\{\nu_i\}_{i=1}^n$ preserves the privacy of labels and protected groups.

The updates in Equations~\eqref{dual_update} and~\eqref{primal_update} are performed in an alternating manner~\cite{xu2020unified}, implying that between every two communication rounds each data party updates its local variable once. Nevertheless, in real-world VFL tasks, different data parties typically have imbalanced computational resources and complete their local updates within different time frames.
Enforcing all the parties to launch a single local update between two consecutive communication rounds results in inefficiency~\cite{liu2019communication,zhang2021secure}. 
Therefore, it is highly desirable to enable multiple local updates \textit{in parallel} when solving the fair VFL task.

Motivated by this reason, we allow
each active data party to perform multiple local gradient updates {in parallel} before exchanging information with the server. For passive parties, a single model update is carried out between two consecutive communicating rounds with the server.
The algorithms for the server, active and passive data parties are summarized in Algorithms~\ref{alg:server},~\ref{alg:active} and~\ref{alg:passive}, respectively.

We make the following technical assumption for the number of local iteration rounds performed by active data parties. Such an assumption is standard in federated learning, e.g., standard VFL~\cite{liu2019communication} and HFL~\cite{mcmahan2016federated}.

\begin{assumption}\label{Q-asynchronous}
Between two consecutive communication rounds with the server, each active data party performs updates at least once and at most $Q\geq 1$ times.
\end{assumption}

\begin{algorithm}[t]
	\caption{Fair VFL for Server }\label{alg:server}
	
	\textbf{Input}: Labels and protected classes $\{y_i, s_i \}_{i=1}^n$, parameter $\{c_t\}_{t\geq 0}$, {unfairness tolerance $\varepsilon$, step size $\{\eta_t\}_{t\geq 0}$ and $\beta$.}
	
	\leftline{\textbf{Initialize}: Set $\lambda^{(0)}=0$.}
	\begin{algorithmic}[1] 
		\FOR{$t=1,2\cdots$}
		\STATE Compute $\nabla_{\lambda}\tilde{f}_t({ \theta},{ \lambda})$ using Equation~\eqref{def:grad-lambda}.
		\STATE Update ${ \lambda}\leftarrow  \left[\lambda + \beta \nabla_{{ \lambda}} 
		\tilde{f}_{t-1}(\theta,\lambda )\right]_+$.
		\STATE Receive $\left\{(X_i)_k^T \theta_k \right\}_{i=1}^n$ from each party $k$.
		\STATE Send $\left\{X_i^T\theta \right\}_{i=1}^n$ and ${ \lambda}$ to all the active parties.
		\STATE Send $\{\nu_i\}_{i=1}^n$ to all the passive parties.
		\ENDFOR
	\end{algorithmic}
\end{algorithm}

\begin{algorithm}[t]
\caption{Fair VFL for active data party $k$}
\label{alg:active}

\leftline{\textbf{Input}: Local data $\{({ X}_i)_k, y_i, s_i \}_{i=1}^n$, step size $\{\eta_t\}_{t\geq 0}$.}

\leftline{\textbf{Initialize}: Set $\theta^{(0)}=0$, $\lambda^{(0)}=0$.}
\begin{algorithmic}[1] 
	\FOR{$t=0,1,2,\cdots$}
	\STATE \textit{In parallel for each active party $k$}
		\STATE Receive $\left\{X_i^T\theta \right\}_{i=1}^n$ and ${ \lambda}$ from Server.
	\WHILE {no new information from {Server}}
	\STATE Compute $\left\{ X_i^T{\tilde{\theta}}  \right\}_{i=1}^n$.
	\STATE Compute $\nabla_{k}\tilde{f}_t({{ \tilde{\theta}}},{ \lambda})$.
	\STATE Update $\theta_k \leftarrow \theta_k - {\eta_t^{-1}}\nabla_{k}\tilde{f}_t({\tilde{ \theta}},{ \lambda})$.
	\STATE Send $\left\{(X_i)_k^T {\theta}_k \right\}_{i=1}^n$ to Server.
	\ENDWHILE
	\ENDFOR
\end{algorithmic}
\end{algorithm}

\begin{algorithm}[t]
	\caption{Fair VFL for passive data party $k$}
	\label{alg:passive}
	
\leftline{	\textbf{Input}: Local data $\{({ X}_i)_k \}_{i=1}^n$, step size $\{\eta_t\}_{t\geq 0}$.}
	
\leftline{	\textbf{Initialize}: Set $\theta^{(0)}=0$, $\lambda^{(0)}=0$.}
	\begin{algorithmic}[1] 
		\FOR{$t=0,1,2,\cdots$}
		\STATE \textit{In parallel for each passive party $k$}
		\STATE Receive $\left\{\nu_i \right\}_{i=1}^n$ from Server.
		\STATE Compute $\nabla_{k}\tilde{f}_t({{ \tilde{\theta}}},{ \lambda})$.
		\STATE Update $\theta_k \leftarrow \theta_k - {\eta_t^{-1}}\nabla_{k}\tilde{f}_t({\tilde{ \theta}},{ \lambda})$.
		\STATE Send $\left\{(X_i)_k^T {\theta}_k \right\}_{i=1}^n$ to Server.
		\ENDFOR
	\end{algorithmic}
\end{algorithm}

\subsection{Security Analysis}
In the algorithm, messages bearing intermediate computation results are shared between the server and the data parties, such as the inner product of model parameters and local features $\{ (X_i)_k^T\theta_k\}$, and the weighted partial derivative of the loss $\nu_i$. Note that broadcasting the inner product of model parameters and local features is standard in VFL and has been verified to be privacy-preserving if the dataset and training parameters are undisclosed~\cite{liu2019communication,zhang2021secure}. Thus, we focus on whether sharing $\nu_i$ with the passive parties may leak information about the sensitive attribute and the label. In particular, we consider the following threat model~\cite{cheng2021secureboost,xu2019hybridalpha,zhang2021secure}.

\paragraph{Honest-but-curious} All the data parties follow the algorithm to perform communication and computation. However, they may record the intermediate results to infer the sensitive attribute and the label.

\begin{definition}[Inference attack] 
	An inference attack refers to the behavior that the $k$-th passive party infers the sensitive attribute and the label held by the server.
\end{definition}

\begin{theorem}\label{thm:security}
	Under the honest-but-curious threat model, the proposed algorithm is secure against the inference attack.
\end{theorem}
\begin{proof}[Proof of Theorem \ref{thm:security}]\let\qed\relax
	At each iteration $t$, only $\nu_i^{(t)}$ defined in Equation~\eqref{nu_i} is revealed to each passive party $k$.
	Note that the value of $\nu_i^{(t)}$ is dependent on the tuple $\left(\lambda_1^{(t)}, \lambda_2^{(t)},  \mbox{group}\,\,\mbox{for}\,\,i, \frac{\partial l(X_i^T\theta^{(t)}, y_i)}{\partial(X_i^T\theta^{(t)})}\right).$
	To recover the sensitive attribute from $\nu_i^{(t)}$, the passive parties further need $\lambda_1^{(t)}$, $\lambda_2^{(t)}$, and $\frac{\partial l(X_i^T\theta^{(t)}, y_i)}{\partial(X_i^T\theta^{(t)})}$, which are infeasible.
	Without loss of generality, suppose $i\in\mathcal{N}^a$.
	 Given $\nu_i^{(t)}$, infinite feasible tuples can be constructed, e.g.,  $$\left(\frac{(-\lambda_1^{(t)}+\sigma)\lvert \mathcal{N}^b \rvert}{\lvert \mathcal{N}^a \rvert}, \frac{(\lambda_2^{(t)}-\sigma)\lvert \mathcal{N}^b \rvert}{\lvert \mathcal{N}^a \rvert}, \mathcal{N}^b,  \frac{\partial l(X_i^T\theta^{(t)}, y_i)}{\partial(X_i^T\theta^{(t)})}\right)$$ for some arbitrary $\sigma$, where $\lvert \cdot \rvert$ denotes the cardinality of the set.
	Therefore, the sensitive attribute remains secure regardless of the number of iterations. Since both the form of loss and $X_i^T\theta^{(t)}$ are unknown to the passive data parties, it is also impossible for passive parties to recover the label. 
\end{proof}

\paragraph{Security of the sensitive attribute} Theorem 1 emphasizes that for any passive party $k$ following Algorithm 1, there exist an infinite number of grouping patterns that yield the same set of $\{\nu_i\}_{i=1}^n$. That is, each data party cannot infer the sensitive attribute based on the received messages $\{\nu_i\}_{i=1}^n$ regardless of the number of iterations.

\paragraph{Security of labels} In order to recover the label, the data party should first determine $o_i(t)=\frac{\partial l(X_i^T\theta(t), y_i)}{\partial(X_i^T\theta(t))}$ based on Equation~\eqref{nu_i}, which is not possible. Even with the knowledge of $o_i(t)$, the passive data party still cannot discover the label since the loss form is not available to them.

We remark that, for data samples whose $k$-th blocks are similar, the $k$-th passive data party cannot gain additional information about the sensitive attributes of these samples. The reason is that the other features besides the $k$-th block can be distinct, including the sensitive attribute. Therefore, those data samples lead to different $o_i(t)=\frac{\partial l(X_i^T\theta(t), y_i)}{\partial(X_i^T\theta(t))}$ and do not make the inference procedure easier.

\subsection{Convergence Analysis}

Before establishing the convergence result for the proposed method, we make the following assumption about the smoothness of $f$ defined in Equation~\eqref{Lagrangian}.

\begin{assumption}\label{assum:smoothness}
The function $f(\theta,\lambda)$ is continuously differentiable and there exist constants $L$, $L_{\lambda}$, and $L_{12}$ such that for every $\theta, \theta', \theta''\in\mathbb{R}^m$ and $\lambda, \lambda', \lambda''\in\mathbb{R}^2_+$, we have
\begin{equation*}
	\begin{split}
		\lVert  \nabla_{\theta} f(\theta', \lambda) -\nabla_{\theta} f(\theta'', \lambda)  \rVert & \leq L\lVert \theta' -\theta'' \rVert,\\
		\lVert  \nabla_{\lambda} f(\theta, \lambda') -\nabla_{\lambda} f(\theta, \lambda'')  \rVert & \leq L_\lambda \lVert \lambda' -\lambda'' \rVert,\\
		\lVert  \nabla_{\lambda} f(\theta', \lambda) -\nabla_{\lambda} f(\theta'', \lambda)  \rVert &\leq L_{12}\lVert \theta' -\theta'' \rVert.
	\end{split}
\end{equation*}
\end{assumption}

To proceed, we define the stationarity gap
\begin{equation}\label{stationarity}
\begin{split}
		\nabla G( { \theta}^{(t)},  { \lambda}^{(t)}) = \begin{pmatrix}
		{\eta_t}\left(  { \theta}^{(t)} -{ \theta}^{(t+1)} \right)\\
		\frac{1}{\beta}\left(	 { \lambda}^{(t)} - \left[{ \lambda}^{(t)} + \beta\nabla_{ { \lambda}}f( { \theta}^{(t)},  { \lambda}^{(t)}) \right]_{+} \right)
	\end{pmatrix}.
\end{split}
\end{equation}
Given some target accuracy $\delta>0$, let
$$
{T}(\delta) = \min \left\{ t |\lVert	\nabla {G}( { \theta}^{(t)},  { \lambda}^{(t)})  \rVert \leq {\delta}\right \}.
$$
For the proposed algorithms, we provide a bound on $T(\delta)$ in Theorem~\ref{thm:convergence}, whose proof is postponed to Appendix A.

\begin{theorem}[Convergence Guarantee]\label{thm:convergence}
Suppose that Assumptions~\ref{Q-asynchronous} and~\ref{assum:smoothness} hold. Let $\{(\theta^{(t)}, \lambda^{(t)})\}_{t\geq 0}$ be a sequence generated by Algorithms~\ref{alg:server},~\ref{alg:active}, and~\ref{alg:passive}. If $\beta \geq L_{\lambda}$, $c_t = ({\beta}{{t}^{-1/4}})/2$, and
\begin{small}
	\begin{equation*}
		\begin{split}
			\eta_t \geq \frac{L^2(KQ+2)(KQ-1) + 2(L+1)}{4}+ \frac{L_{12}^2KQ (1+32\tau \sqrt{t})}{2\beta}
		\end{split}
	\end{equation*}
\end{small}
for some $\tau >8$, then
$
	T(\delta) =\mathcal{O}(\delta^{-4})
$
holds for any given $\delta>0$.
\end{theorem}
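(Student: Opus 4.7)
The plan is to adapt the unified smoothing analysis of \cite{xu2020unified} for nonconvex-concave minimax problems to our asynchronous block-coordinate setting. I would construct a Lyapunov function of the form $\Phi_t = \tilde{f}_t(\theta^{(t)}, \lambda^{(t)}) - 2\max_{\lambda\in\mathbb{R}^2_+} \tilde{f}_t(\theta^{(t)}, \lambda)$ (possibly with an extra regularization correction term) and show that $\Phi_{t}-\Phi_{t+1}$ dominates a positive multiple of $\eta_t\|\theta^{(t+1)}-\theta^{(t)}\|^2 + \beta^{-1}\|\lambda^{(t+1)}-\lambda^{(t)}\|^2$, from which the stationarity gap in \eqref{stationarity} can be extracted.

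First, I would prove a primal descent inequality over one communication round. Between rounds $t$ and $t+1$ the parties perform $\kappa(t)\le KQ$ asynchronous partial-gradient steps using stale reads $\tilde{\theta}^{(t,\tau)}$ rather than the current iterate $\theta^{(t,\tau)}$. Summing the telescoping increments $\tilde{f}_t(\theta^{(t,\tau+1)},\lambda^{(t)})-\tilde{f}_t(\theta^{(t,\tau)},\lambda^{(t)})$ via $L$-smoothness in $\theta$, the residual is the staleness error $\|\nabla_{\psi(t,\tau)}\tilde{f}_t(\tilde{\theta}^{(t,\tau)},\lambda^{(t)}) - \nabla_{\psi(t,\tau)}\tilde{f}_t(\theta^{(t,\tau)},\lambda^{(t)})\|$. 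By Assumption \ref{Q-asynchronous}, at most $KQ-1$ coordinate blocks can differ between $\tilde{\theta}^{(t,\tau)}$ and $\theta^{(t,\tau)}$, and each such difference is itself a past gradient step; a double Young's inequality then gives the $L^{2}(KQ+2)(KQ-1)/4$ contribution to the lower bound on $\eta_t$.

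Second, I would handle the dual side. Since $\tilde{f}_t$ is $c_t$-strongly concave in $\lambda$ with an $L_\lambda$-Lipschitz dual gradient, a single projected gradient ascent step with $\beta\ge L_\lambda$ yields the textbook contraction toward the maximizer $\lambda^\star_t(\theta^{(t)}) := \arg\max_{\lambda\in\mathbb{R}^2_+}\tilde{f}_t(\theta^{(t)},\lambda)$. The subtlety is that $\lambda^\star_t(\cdot)$ is only $(L_{12}/c_t)$-Lipschitz in $\theta$, so the shift of the target when $\theta^{(t)}\to\theta^{(t+1)}$ has to be charged against the primal descent through a Young's inequality whose weight grows like $\sqrt{t}$; this produces exactly the $L_{12}^{2}KQ(1+32\tau\sqrt{t})/(2\beta)$ term in the condition on $\eta_t$.

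Finally, I would relate the unregularized stationarity gap to the Lyapunov drop. The difference $\nabla_\lambda f - \nabla_\lambda \tilde{f}_t$ contributes an $\mathcal{O}(c_t\|\lambda^{(t)}\|)$ bias, and the primal part of $\nabla G$ equals $\eta_t(\theta^{(t)}-\theta^{(t+1)})$ by construction. Telescoping $\Phi_0-\Phi_T \ge \sum_{t=1}^T(\text{decrement})$ and plugging in $\eta_t=\Theta(\sqrt{t})$ together with $c_t=\beta t^{-1/4}/2$ gives $\frac{1}{T}\sum_{t=1}^{T}\|\nabla G(\theta^{(t)},\lambda^{(t)})\|^{2} = \mathcal{O}(T^{-1/2})$, so $\min_{t\le T}\|\nabla G\|^{2} = \mathcal{O}(T^{-1/2})$ and hence $T(\delta)=\mathcal{O}(\delta^{-4})$. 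The main obstacle will be the asynchronous block-coordinate part: the single step-size schedule $\eta_t=\Theta(\sqrt{t})$ must simultaneously dominate the $\Theta(\sqrt{t})$ blow-up of $L_{12}^{2}/c_t$ from the drifting maximizer $\lambda^\star_t$, swallow the staleness error accumulated across up to $KQ$ parallel local updates, and still leave $\sum_t 1/\eta_t$ divergent enough to certify convergence—balancing these three effects under the single choice $c_t=\beta t^{-1/4}/2$ is what pins the exponent $-4$.
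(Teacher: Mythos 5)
Your overall architecture is genuinely different from the paper's, and the difference matters at exactly the point where your plan is vaguest: the dual step. You propose the Lin--Jin--Jordan-style potential $\tilde{f}_t(\theta^{(t)},\lambda^{(t)})-2\max_{\lambda}\tilde{f}_t(\theta^{(t)},\lambda)$ together with a ``textbook contraction'' of the projected ascent step toward the moving maximizer $\lambda^{\star}_t(\theta)$. That contraction requires the dual step size to be \emph{small} relative to the dual smoothness (roughly $\beta\leq 1/\tilde{L}_{\lambda}$ with $\tilde{L}_{\lambda}=L_{\lambda}+c_t$), whereas the theorem's hypothesis is $\beta\geq L_{\lambda}$ — the large-step regime. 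Under the stated assumptions your contraction inequality is simply unavailable, so the error $\lVert\lambda^{(t)}-\lambda^{\star}_t(\theta^{(t)})\rVert$ cannot be controlled the way you describe. The paper never goes near the maximizer: it works entirely with consecutive iterates, combining the projection optimality conditions at rounds $t$ and $t+1$ with the strong-concavity inequality of Nesterov to obtain a recursion of the form $\frac{\beta}{2}\lVert\lambda^{(t+1)}-\lambda^{(t)}\rVert^2\lesssim(\frac{\beta}{2}-\frac{c_t}{2})\lVert\lambda^{(t)}-\lambda^{(t-1)}\rVert^2+\cdots$, and then absorbs the non-contractive part into a Lyapunov function carrying explicit correction terms $\frac{4\beta^2}{c_{t+1}}\lVert\lambda^{(t+1)}-\lambda^{(t)}\rVert^2-4\beta\lVert\lambda^{(t+1)}-\lambda^{(t)}\rVert^2-4\beta(\frac{c_{t-1}}{c_t}-1)\lVert\lambda^{(t+1)}\rVert^2$. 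These correction terms are precisely what generate the $32\beta L_{12}^2\kappa(t)/c_t^2=\Theta(\sqrt{t})$ contribution to $\eta_t$; without them (or without a valid contraction substitute) your telescoping does not close. Relatedly, your potential drifts because $\tilde f_t$ depends on $c_t$, and the ``possibly with an extra regularization correction term'' remark is doing a lot of unacknowledged work: the paper needs the condition $\frac{1}{c_{t+1}}-\frac{1}{c_t}\leq\frac{2}{5\beta}$ and a $\max_t\lVert\lambda^{(t)}\rVert$ bound to tame exactly those drift terms.

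The parts of your plan that do align with the paper are the primal side and the endgame. Your accounting of the staleness error — at most $\kappa(t)-1\leq KQ-1$ blocks differ between the inconsistent read and the true iterate, each difference being a past update step, handled by Young's inequality — is essentially the paper's Lemma~\ref{lem:primal} and correctly explains the $L^2(KQ+2)(KQ-1)/4$ term. The final extraction is also the same in spirit, though your unweighted claim $\frac{1}{T}\sum_t\lVert\nabla G\rVert^2=\mathcal{O}(T^{-1/2})$ should be the weighted one: since $\lVert\nabla\tilde G\rVert^2\lesssim\eta_t^2\lVert\Delta\theta\rVert^2$ while the Lyapunov drop only controls $\eta_t\lVert\Delta\theta\rVert^2$, one gets $\sum_t\gamma_t^{-1}\lVert\nabla\tilde G(\theta^{(t)},\lambda^{(t)})\rVert^2\leq d_2 D$ with $\gamma_t=\Theta(\sqrt{t})$, and then $\sum_{t\leq T}t^{-1/2}\geq\sqrt{T}-2$ yields $\min_{t\leq T}\lVert\nabla\tilde G\rVert^2=\mathcal{O}(T^{-1/2})$; the residual bias $c_{t-1}\lVert\lambda^{(t)}\rVert$ from deregularization is handled exactly as you indicate. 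In short: right primal lemma, right rate arithmetic, but the dual-progress mechanism must be replaced by the consecutive-iterate recursion (or the step-size hypothesis changed) before the argument is sound.
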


\section{Experiments}
\label{sec:experiments}
{In this section, we conduct extensive experiments on real-world datasets to validate the fairness and convergence behaviour of our proposed method. Section~\ref{sec:exp_setup} introduces the data sets and general settings of our experiment.}

\subsection{Experiment Setup} \label{sec:exp_setup}

{
As we mentioned in Section~\ref{sec:setup}, we examine the performance of the proposed method with two different machine learning models. The first is the linear model (LM). The loss function $L(\theta)$ can be expressed as 
    \[ 
        \frac{1}{n} \sum_{i=1}^{n} \log\left(1+\exp\left(-y_i\sum_{k=1}^K \theta_k^T X_i^k\right) \right) + \frac{\mu}{2}\sum_{k=1}^K\lVert  \theta_k \rVert^2.
    \]
The second is the multilayer perceptron model (MLP). The loss function $L(\theta)$ can be expressed as 
    \[ 
        \frac{1}{n} \sum_{i=1}^{n} \log\left(1+\exp\left(-y_i\sum_{k=1}^K g(\theta_k, X_i^k)\right) \right) + \frac{\mu}{2}\sum_{k=1}^K\lVert  \theta_k \rVert^2,
    \]
where $g(\theta; \cdot)$ is a MLP model with 2 hidden layers. 
For both models, we test the performance of our proposed method with different fairness levels, i.e. 
\[\epsilon \in \{0.001, 0.01, 0.05, 0.1, 0.5\}.\]
We use FairVFL($\epsilon$) to denote our proposed method with fairness level equal to $\epsilon$.
}

{
We consider $K=6$ data parties where one of them is active, and set $\mu=2/n$. The hyperparameters for fair VFL are set as follows: $c_t = 10^{-3}$, $\eta_t = 100$ and $\beta = 0.1$. 
}

{
We implement our approach in the Julia language~\cite{bezanson2017julia}. Our code is publicly available at \url{https://github.com/ZhenanFanUBC/FairVFL.jl}.
}

\subsubsection{Compared Methods} We use the state-of-the-art VFL method (FedBCD)~\cite{liu2019communication} as a baseline. We also consider two extensions of FedBCD based on the post-processing fairness enhancing strategies, equal opportunity (EO)~\cite{hardt2016equality} and calibrated equal opportunity (CEO)~\cite{pleiss2017fairness}. It is worth mentioning that these two post-processing methods require a unified available training dataset, and thus is not directly implementable in VFL. For comparison reasons, we assume a unified available dataset for these two methods. For all the compared methods, we use the codes published by the authors~\cite{liu2019communication,hardt2016equality,pleiss2017fairness}.

\subsubsection{Datasets} We adopt the following three real-world datasets that are commonly used in literature to verify the performance of fair model training methods~\cite{donini2018empirical,komiyama2018nonconvex}.

The Adult dataset~\cite{kohavi1996scaling} takes an individual's education level, gender, occupation, and some other attributes as features and take whether or not this individual's annual income is above 50,000 dollars as the label. Following the setting considered in~\cite{hardt2016equality}, we use `female' and `male' as the pair of protected groups and use ``above 50,000 dollars'' as the protected class. In the experiments, we uniformly sample 40,000 from 45,222 data instances for training and use the remaining 5,222 data instances for testing. For each sample,  $19$ of $104$ features are assigned to the active data party, and the remaining is evenly distributed to the other $5$ passive parties.

The Compas dataset~\cite{compas} collects 5,278 data instances and contains people's demographic and criminal records. 
Following~\cite{hardt2016equality}, we set `African-American' and `Caucasian' as the pair of protected groups and take ``not a recidivist'' as the protected class. In the experiments, 4,800 data instances are uniformly sampled as the training data and the remaining 478 instances are used as the testing data. For each sample,  $16$ of $26$ features are assigned to the active party, and the rest is evenly assigned to the other $5$ passive parties.

The Crime dataset~\cite{redmond2002data} has 1994 data instances that give socio-economic information and crime rate on communities within the United States.
Following~\cite{kamiran2012data}, we binarize the feature \textit{ViolentCrimesPerPop} based on a threshold of $0.375$ and take the negative as the protected class. We divide the communities according to whether or not the numerical attribute \textit{racepctblack} is larger than $0.06$, and choose the negative as the protected group. We uniformly sample 1,200 data instances as the training data, and use the remaining 794 as the testing data. For each sample, $49$ of $99$ features are assigned to the active data party, and the remaining is evenly divided into $5$ parts, each owned by a passive data party.

\begin{table*}[t]
\centering
\begin{tabular}{|l|lll|lll|lll|}
\hline
\multicolumn{1}{|c|}{\multirow{2}{*}{Method}} & \multicolumn{3}{c|}{Adult}            & \multicolumn{3}{c|}{Compas} & \multicolumn{3}{c|}{Crime} \\ \cline{2-10} 
\multicolumn{1}{|c|}{}                        & \multicolumn{1}{c}{ACC} & DFP  & DFN  & ACC       & DFP    & DFN    & ACC       & DFP    & DFN   \\ \hline 
FedBCD                                        & 85.06\%                 & 0.14 & 0.18 & 69.25\%   & 0.14   & 0.13   & 88.65\%   & 0.17   & 0.27  \\
FedBCD + EO                                   & 73.23\%                 & 0.01 & 0.04 & 66.73\%   & 0.07   & 0.05   & 85.18\%   & 0.07   & 0.19  \\
FedBCD + CEO                                  & 79.89\%                 & 0.10 & 0.14 & 68.20\%   & 0.11   & 0.10   & 86.36\%   & 0.13   & 0.18  \\
FairVFL($\epsilon$=1e-3)                      & 81.41\%                 & 0.01 & 0.08 & 66.95\%   & 0.02   & 0.02   & 85.50\%   & 0.03   & 0.06  \\
FairVFL($\epsilon$=1e-2)                      & 82.11\%                 & 0.01 & 0.09 & 66.95\%   & 0.03   & 0.02   & 85.80\%   & 0.04   & 0.04  \\
FairVFL($\epsilon$=5e-2)                      & 82.77\%                 & 0.03 & 0.12 & 67.78\%   & 0.03   & 0.04   & 86.13\%   & 0.04   & 0.07  \\
FairVFL($\epsilon$=1e-1)                      & 82.36\%                 & 0.07 & 0.13 & 68.39\%   & 0.05   & 0.07   & 87.14\%   & 0.08   & 0.11  \\
FairVFL($\epsilon$=5e-1)                      & 83.08\%                 & 0.14 & 0.16 & 68.36\%   & 0.11   & 0.14   & 88.65\%   & 0.16   & 0.27  \\ \hline
\end{tabular}
\vspace{2mm}
\caption{Experiment results on fairness (linear model). All the methods are evaluated in terms of test accuracy (ACC), the difference between false-positive rates (DFP), and the difference between false-negative rates (DFN).}
\label{exp:fairness_linear}
\end{table*}

\begin{table*}[t]
\centering
\begin{tabular}{|l|lll|lll|lll|}
\hline
\multicolumn{1}{|c|}{\multirow{2}{*}{Method}} & \multicolumn{3}{c|}{Adult}            & \multicolumn{3}{c|}{Compas} & \multicolumn{3}{c|}{Crime} \\ \cline{2-10} 
\multicolumn{1}{|c|}{}                        & \multicolumn{1}{c}{ACC} & DFP  & DFN  & ACC       & DFP    & DFN    & ACC       & DFP    & DFN   \\ \hline
FedBCD                                        & 85.55\%                 & 0.14 & 0.19 & 70.36\%   & 0.15   & 0.13   & 89.65\%   & 0.17   & 0.31  \\
FedBCD + EO                                   & 77.23\%                 & 0.01 & 0.03 & 66.78\%   & 0.08   & 0.09   & 86.13\%   & 0.07   & 0.14  \\
FedBCD + CEO                                  & 81.39\%                 & 0.10 & 0.04 & 64.44\%   & 0.11   & 0.13   & 88.36\%   & 0.15   & 0.07  \\
FairVFL($\epsilon$=1e-3)                      & 81.98\%                 & 0.00 & 0.11 & 68.41\%   & 0.01   & 0.00   & 84.62\%   & 0.01   & 0.06  \\
FairVFL($\epsilon$=1e-2)                      & 81.98\%                 & 0.01 & 0.11 & 68.83\%   & 0.02   & 0.03   & 85.87\%   & 0.03   & 0.06  \\
FairVFL($\epsilon$=5e-2)                      & 82.17\%                 & 0.04 & 0.12 & 68.62\%   & 0.03   & 0.03   & 86.19\%   & 0.05   & 0.05  \\
FairVFL($\epsilon$=1e-1)                      & 82.55\%                 & 0.07 & 0.13 & 70.29\%   & 0.05   & 0.05   & 88.13\%   & 0.08   & 0.12  \\
FairVFL($\epsilon$=5e-1)                      & 83.15\%                 & 0.14 & 0.13 & 70.36\%   & 0.12   & 0.12   & 89.01\%   & 0.15   & 0.28  \\ \hline
\end{tabular}
\vspace{2mm} 
\caption{Experiment results on fairness (MLP model). All the methods are evaluated in terms of test accuracy (ACC), the difference between false-positive rates (DFP), and the difference between false-negative rates (DFN).}
\label{exp:fairness_mlp}
\end{table*}

\begin{figure*}[t]
    \centering
    \includegraphics[width=.8\textwidth]{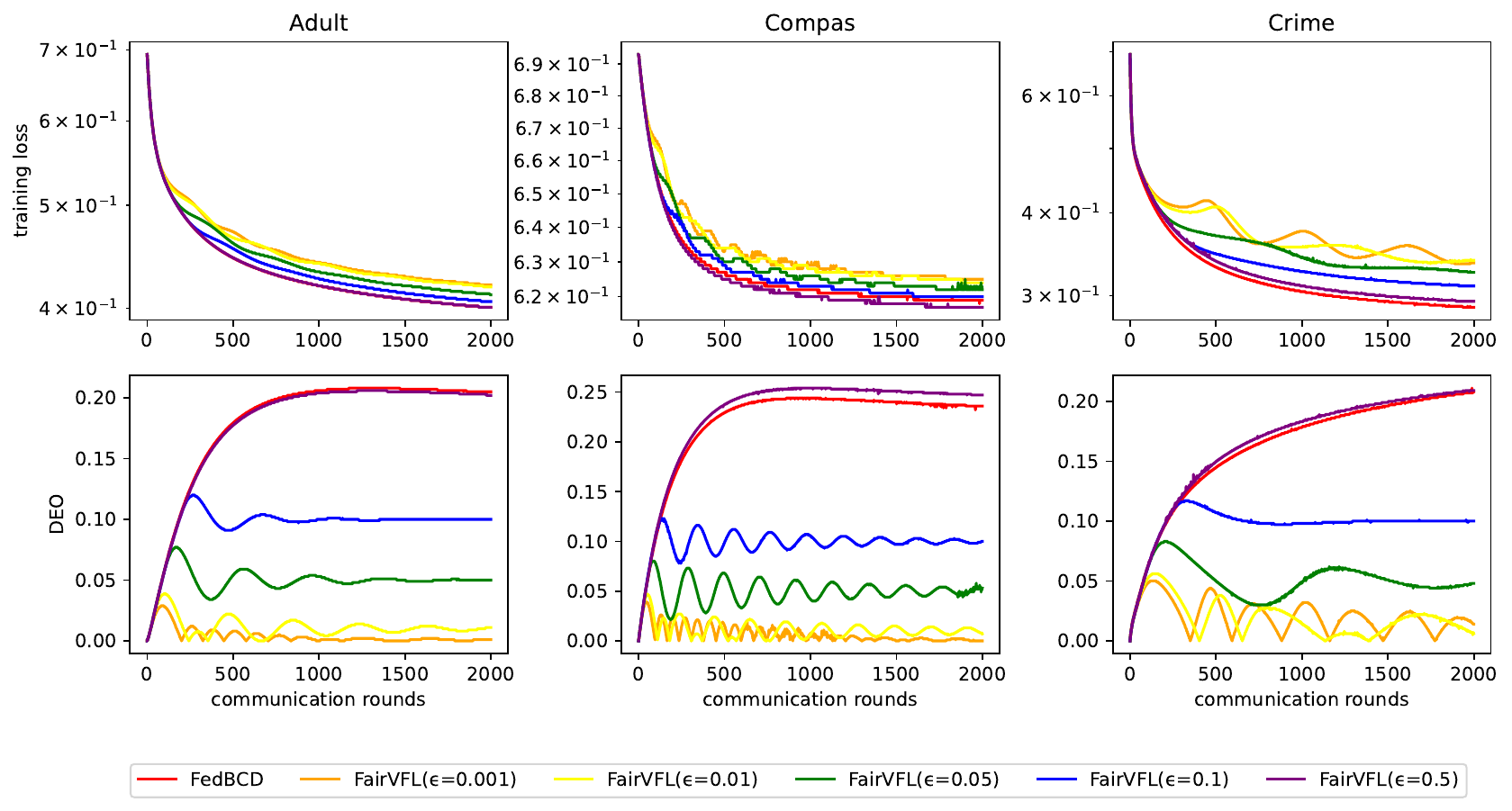}
    \caption{Experiment results on convergence behaviour (linear model). All the methods are evaluated in terms of training objective $L(\theta)$ and the constraint $\mathop{DEO}(\theta)$.}
    \label{fig:convergence_linear}
\end{figure*}

\begin{figure*}[t]
    \centering
    \includegraphics[width=.8\textwidth]{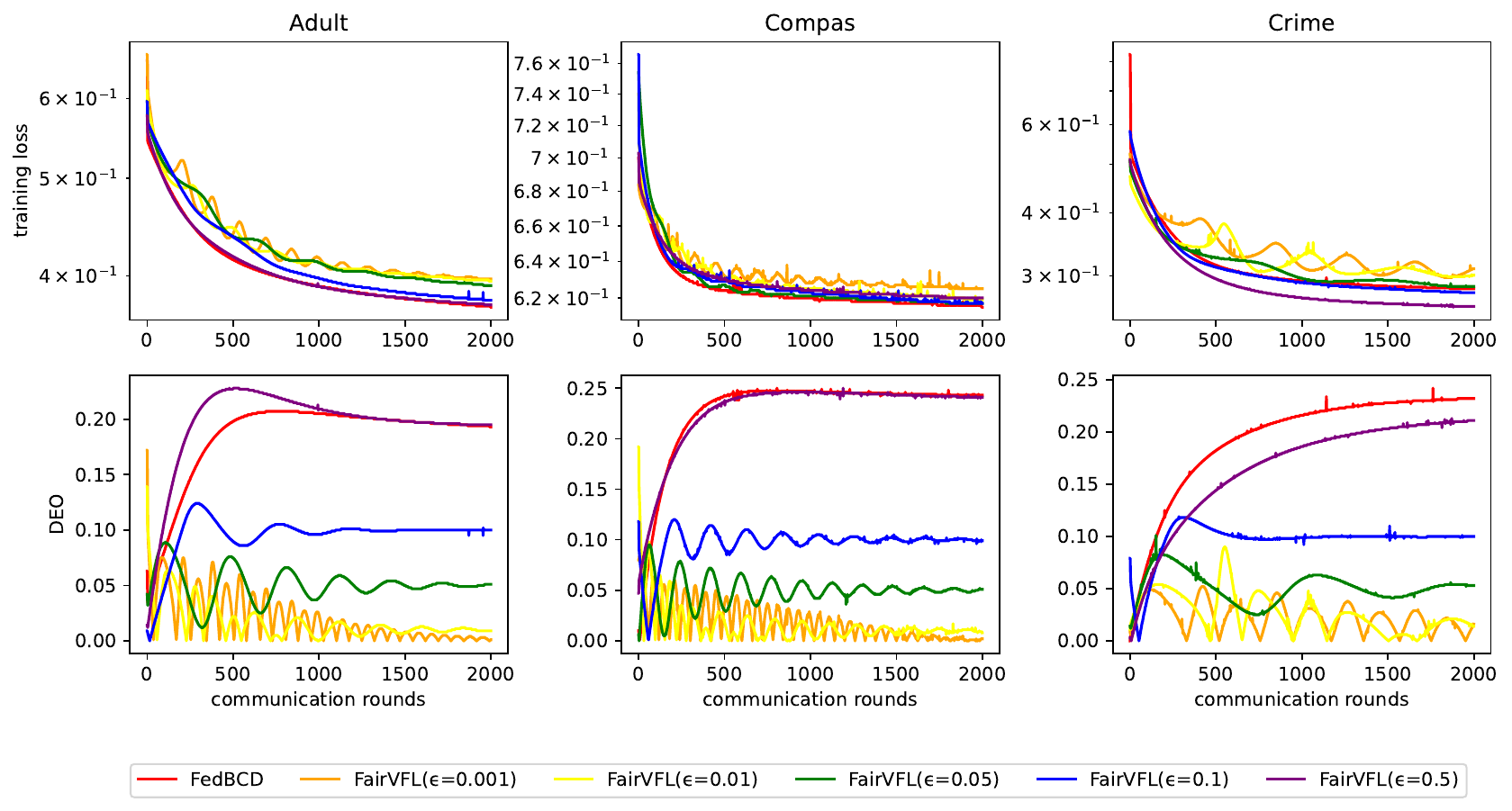}
    \caption{Experiment results on convergence behaviour (MLP model). All the methods are evaluated in terms of training objective $L(\theta)$ and the constraint $\mathop{DEO}(\theta)$.}
    \label{fig:convergence_mlp}
\end{figure*}

\subsection{Experiment results on fairness}
{
In this set of experiments,  we want to check the performance on fairness for our proposed method FairVFL with different fairness levels $\epsilon$, and the compared methods: FedBCD, FedBCD + EO and FedBCD + CEO, where the models for FedBCD + EO and FedBCD + CEO are post-processed based on the model trained by FedBCD, respectively. Given two protected groups $\{a, b\}$, fairness methods aim to ensure that no error rate disproportionately affects any group. In other words, both groups should have a similar false-positive rate, and both groups should have a similar false-negative rate. More specifically, in this experiment, we compare the difference between false-positive rates (DFP) and the difference between false-negative rates (DFN) of the two protected groups. A fair method is expected to have low DFP and DFN. Besides, we also compare the test accuracies (ACC) of all the methods. The results are shown in Table~\ref{exp:fairness_linear} and Table~\ref{exp:fairness_mlp}, where Table~\ref{exp:fairness_linear} contains the results for LM and Table~\ref{exp:fairness_mlp} contains the results for MLP. }

{
As shown in both tables, the model trained by FedBCD cannot achieve good fairness because it ignores fairness during training. The post-processing methods, EO and CEO, typically improve the fairness of the models by FedBCD at the expense of sacrificing prediction accuracy. Notably, EO and CEO both cannot be implemented in a federated manner, because they only apply to a unified available dataset, which intrudes on the data privacy of the data parties. The models trained by fair VFL achieve comparable accuracy with the FedBCD in most cases and demonstrate better fairness in all cases. Besides, there is a trade-off with the fairness level $\epsilon$. The accuracy of
the models trained on the three datasets witnesses a gradual decrease when $\varepsilon$ becomes smaller because reducing $\varepsilon$ renders a tighter DEO constraint and shrinks the feasible region of the fair model training problem.  This suggests that in practice we can use techniques like cross-validation to choose an appropriate fairness level $\epsilon$.
}

\subsection{Experiment results on convergence behaviour}
{
In this set of experiments, we analyze the convergence behaviour of our proposed method on solving Problem~\eqref{constrained_VFL}. So we plot the training objective $L(\theta)$ and constraint $\mathop{DEO}(\theta)$ versus the communication rounds. The results are shown in Figure~\ref{fig:convergence_linear} and Figure~\ref{fig:convergence_mlp}, where Figure~\ref{fig:convergence_linear} contains the results for LM and Figure~\ref{fig:convergence_mlp} contains the results for MLP.
}

{First, we consider the convergence behaviour of the training objective $L(\theta)$. Let's define a value function $v:\mathbb{R}_+ \to \mathbb{R}$ as
\[v(\epsilon) = \min_{\theta} \{L(\theta)\mid \mathop{DEO}(\theta) \leq \epsilon\}.\]
It is easy to see that $v$ is monotonically non-increasing. As we can see from both figures, the larger the fairness level $\epsilon$, the lower value $L(\theta)$ will converge to, which agrees with our analysis of the value function. Besides, when the $\epsilon$ is large enough, the convergence behaviour is similar to FedBCD, which suggests that the constraint is loose. Next, we consider the convergence behaviour of the constraint function $\mathop{DEO}(\theta)$. As we can see from both figures, when the fairness level $\epsilon$ is not too large, $\mathop{DEO}(\theta)$ indeed converges to $\epsilon$. This observation numerically supports the equivalence between problem~\eqref{constrained_VFL} and problem~\eqref{Lagrangian}. When the fairness level $\epsilon$ is too large, $\mathop{DEO}(\theta)$ agrees with the value obtained by FedBCD, which also suggests that the constraint is not playing a role in this case.}

\section{Conclusion}
\label{sec:conclusion}
In this work, we tackle the problem of training fair models in VFL.
To promote fairness in VFL, we incorporate a nonconvex DEO constraint into the optimization problem for training.

To solve the constrained problem in a distributed way, we consider its equivalent dual form and develop an asynchronous min-max optimization algorithm, where each party continuously performs parallelized local updates per communication round to improve communication efficiency.
 
To preserve data privacy, we develop a masking strategy that releases necessary information for local computations without disclosing data and sensitive attributes.
We theoretically prove the convergence of the proposed algorithm.  

The current work opens up new revenues for future research. For example, it may be worthwhile to further strengthen the privacy of the information-releasing mechanism within server using tools from differential privacy, such that the mechanism becomes resistant to privacy attacks based on auxiliary information. Illustrating the potential gain from asynchronous local updates in nonconvex-concave optimization via theoretical evidence is also interesting.

\appendices




\section{Supporting Lemmas and Their Proofs}\label{Supple}

In Appendix \ref{Supple}, we introduce a labeling strategy for the variables, and present three technical lemmas that are used to prove Theorem \ref{thm:convergence} in Appendix \ref{app:pf-thm}.

\textbf{Labeling strategy.} When allowing multiple local updates in parallel, each data party has inconsistent read of the actual model parameter.
Between two consecutive communication rounds at $t$ and $t+1$,
we define by $\theta^{(t,\tau)}$ the actual model parameter with ${\theta}^{(t,0)}= \theta^{(t)}$. Let $\psi(t,\tau)$ be the index of the data party that performs update at $(t,\tau)$. The local inconsistent read of $\theta^{(t,\tau)}$ by party $\psi(t,\tau)$ is written as
\begin{equation*}
\tilde{\theta}^{(t,\tau)} =[\theta_{1}^{(t)};\cdots;\theta_{\psi(t,\tau)-1}^{(t)};\theta_{\psi(t,\tau)}^{(t,\tau)};\theta_{\psi(t,\tau)+1}^{(t)}; \cdots;\theta_{K}^{(t,\tau)}].
\end{equation*}
Based on this labeling strategy, the local updates of the data parties can be summarized as 
\begin{equation}\label{overall_update}
\theta^{(t,\tau+1)} = 	\theta^{(t,\tau)} - \frac{1}{\eta_t}{ U}_{\psi(t,\tau)} \tilde{g}^{(t,\tau)}
\end{equation} 
where $U_{\psi(t,\tau)}\in\mathbb{R}^{m\times m_i}$,  $[U_1,\cdots,U_q]=I_m$ and 
\begin{small}
\begin{equation*}
	\tilde{g}^{(t,\tau)} =  \nabla\tilde{f}_t(\tilde{\theta}^{(t,\tau)},{ \lambda^{(t)}}).
\end{equation*}
\end{small}
Let $\kappa(t)$ represents the number of updating times and $\theta^{(t+1)} = \theta^{(t,\kappa(t))}$.

\begin{lemma}[Primal Progress]\label{lem:primal}
	Suppose Assumption \ref{assum:smoothness} holds. Then, for all $t\geq 0$, it holds that
	\begin{equation*}
		\begin{split}
			f({ \theta}^{(t+1)}, { \lambda}^{(t)}) - 		f({ \theta}^{(t)}, { \lambda}^{(t)})
			\leq  -\iota_t\sum_{\tau=0}^{\kappa(t)-1}  \left \lVert  { \theta}^{(t,\tau+1)}- { \theta}^{(t,\tau)}  \right\rVert^2
		\end{split}
	\end{equation*}
	where $$\iota_t = \eta_t-\frac{L+1}{2} - \frac{ L^2(KQ+2)(KQ-1)}{4}.$$
\end{lemma}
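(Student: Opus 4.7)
The plan is to apply an $L$-smoothness descent inequality to each asynchronous local update, express everything in terms of the per-step movement $\|\theta^{(t,\tau+1)}-\theta^{(t,\tau)}\|^2$, telescope over $\tau = 0,\ldots,\kappa(t)-1$, and then absorb the extra ``staleness'' term that appears because each party reads the inconsistent snapshot $\tilde\theta^{(t,\tau)}$ instead of $\theta^{(t,\tau)}$. A useful preliminary observation is that $\tilde{f}_t$ and $f$ differ only by a function of $\lambda$, so $\nabla_\theta \tilde{f}_t = \nabla_\theta f$ and the update rule \eqref{overall_update} reads $\theta^{(t,\tau+1)}-\theta^{(t,\tau)} = -\eta_t^{-1}U_{\psi(t,\tau)}\nabla_{\psi(t,\tau)}f(\tilde\theta^{(t,\tau)},\lambda^{(t)})$. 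Under Assumption~\ref{assum:smoothness}, $f(\cdot,\lambda^{(t)})$ is $L$-smooth, giving
\begin{equation*}
f(\theta^{(t,\tau+1)},\lambda^{(t)})-f(\theta^{(t,\tau)},\lambda^{(t)}) \leq \langle \nabla_\theta f(\theta^{(t,\tau)},\lambda^{(t)}), \theta^{(t,\tau+1)}-\theta^{(t,\tau)}\rangle + \frac{L}{2}\|\theta^{(t,\tau+1)}-\theta^{(t,\tau)}\|^2.
\end{equation*}

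Next I would add and subtract $\nabla_{\psi(t,\tau)}f(\tilde\theta^{(t,\tau)},\lambda^{(t)})$ inside the inner product. The piece evaluated at $\tilde\theta^{(t,\tau)}$ collapses, via the update rule, to $-\eta_t\|\theta^{(t,\tau+1)}-\theta^{(t,\tau)}\|^2$. The remaining cross term $\langle \nabla_\psi f(\theta^{(t,\tau)},\lambda^{(t)}) - \nabla_\psi f(\tilde\theta^{(t,\tau)},\lambda^{(t)}),\, \theta^{(t,\tau+1)}-\theta^{(t,\tau)}\rangle$ is bounded by Young's inequality and the Lipschitz bound in Assumption~\ref{assum:smoothness} to yield $\tfrac{1}{2}\|\theta^{(t,\tau+1)}-\theta^{(t,\tau)}\|^2 + \tfrac{L^2}{2}\|\theta^{(t,\tau)}-\tilde\theta^{(t,\tau)}\|^2$. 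Summing over $\tau=0,\ldots,\kappa(t)-1$ telescopes the left side to $f(\theta^{(t+1)},\lambda^{(t)})-f(\theta^{(t)},\lambda^{(t)})$ and leaves, on the right,
\begin{equation*}
-\Bigl(\eta_t-\tfrac{L+1}{2}\Bigr)\sum_{\tau=0}^{\kappa(t)-1}\|\theta^{(t,\tau+1)}-\theta^{(t,\tau)}\|^2 + \frac{L^2}{2}\sum_{\tau=0}^{\kappa(t)-1}\|\theta^{(t,\tau)}-\tilde\theta^{(t,\tau)}\|^2,
\end{equation*}
so the claimed coefficient will pop out once the staleness sum is bounded by a multiple of $\sum_\tau\|\theta^{(t,\tau+1)}-\theta^{(t,\tau)}\|^2$.

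The last and most delicate step is this staleness bound. By the very definition of $\tilde\theta^{(t,\tau)}$, it agrees with $\theta^{(t,\tau)}$ on block $\psi(t,\tau)$ and uses the round-start values $\theta_k^{(t)}$ on the other blocks, so $\|\theta^{(t,\tau)}-\tilde\theta^{(t,\tau)}\|^2 \leq \|\theta^{(t,\tau)}-\theta^{(t)}\|^2$. Writing the right-hand side as a telescoping sum $\theta^{(t,\tau)}-\theta^{(t)} = \sum_{\tau'=0}^{\tau-1}(\theta^{(t,\tau'+1)}-\theta^{(t,\tau')})$ and applying Cauchy--Schwarz gives $\|\theta^{(t,\tau)}-\theta^{(t)}\|^2 \leq \tau\sum_{\tau'=0}^{\tau-1}\|\theta^{(t,\tau'+1)}-\theta^{(t,\tau')}\|^2$. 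Invoking Assumption~\ref{Q-asynchronous} (which gives $\kappa(t)\leq KQ$) and swapping the order of summation bounds the double sum by $\tfrac{(KQ+2)(KQ-1)}{2}\sum_{\tau=0}^{\kappa(t)-1}\|\theta^{(t,\tau+1)}-\theta^{(t,\tau)}\|^2$, which combines with the $L^2/2$ factor to produce exactly the coefficient in the lemma.

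The main obstacle, and the reason the proof is not completely routine, is this staleness bookkeeping: one must pin down precisely which blocks are fresh vs.\ stale in the read $\tilde\theta^{(t,\tau)}$, control the resulting vector in terms of past local movements, and then manage a double summation whose bound depends on the a priori delay parameter $KQ$. The rest of the argument is a fairly standard descent-lemma plus Young's-inequality manipulation applied block-wise.
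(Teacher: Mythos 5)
Your proof is correct: the block-wise descent lemma, the cancellation via the update rule, Young's inequality on the staleness cross term, and the Cauchy--Schwarz/double-summation bound on $\sum_{\tau}\lVert\theta^{(t,\tau)}-\tilde\theta^{(t,\tau)}\rVert^2$ together yield the stated coefficient (in fact your count gives the slightly tighter $L^2KQ(KQ-1)/4$, which implies the lemma since $KQ\le KQ+2$). The paper itself defers this proof to the appendix but never actually includes it there, so there is nothing to compare against; your argument is the standard one that the lemma's constants evidently presuppose, and it fills the gap correctly.
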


\begin{proof}[Proof of Lemma \ref{lem:primal}]\let\qed\relax
	We start by considering
	\begin{equation*}
		\begin{split}
			&f({ \theta}^{(t,\tau+1)}, { \lambda}^{(t)}) - 		f({ \theta}^{(t,\tau)}, { \lambda}^{(t)})  \\
			&\overset{(i)}{\leq}  \left \langle \nabla_{{ \theta}}f({ \theta}^{(t,\tau)}, { \lambda}^{(t)}),  { \theta}^{(t,\tau+1)}- { \theta}^{(t,\tau)}   \right\rangle \\
			& \quad + \frac{L}{2}\left\lVert { \theta}^{(t,\tau+1)}-{ \theta}^{(t,\tau)} \right\rVert^2  \\
			&\overset{(ii)}{\leq}  		   -\left( \eta_t-\frac{L+1}{2}\right) \left\lVert  { \theta}^{(t,\tau+1)}- { \theta}^{(t,\tau)}  \right\rVert^2  \\
			&\quad + \frac{1}{2}\left\lVert    \left(  \nabla_{{ \theta}}f({ \theta}^{(t,\tau)}, { \lambda}^{(t)}) - \tilde{g}^{(t,\tau)}\right)_{\psi(t,\tau)} \right\rVert^2   
		\end{split}
	\end{equation*}
	where in $(i)$ we use the Lipschitz continuity of the gradient of $f$ and in $(ii)$ the Young's inequality.
Using the Lipschitz continuity of the gradient of $f$ that
$
			\left\lVert    \left(  \nabla_{{ \theta}}f({ \theta}^{(t,\tau)}, { \lambda}^{(t)}) - \tilde{g}^{(t,\tau)}\right)_{\psi(t,\tau)} \right\rVert^2      \leq {L^2}\left\lVert   \theta^{(t,\tau)} - \tilde{\theta}^{(t,\tau)}   \right\rVert^2 ={L^2}  \sum_{k\neq  \psi(t,\tau)}\left\lVert  \left( \theta^{(t,\tau)} - {\theta}^{(t)}  \right)_k  \right\rVert^2,
$
	we obtain
	\begin{small}
	\begin{equation*}
		\begin{split}
			&f({ \theta}^{(t,\kappa(t))}, { \lambda}^{(t)}) - 		f({ \theta}^{(t)}, { \lambda}^{(t)}) \\
			& = \sum_{\tau=0}^{\kappa(t)-1} f({ \theta}^{(t,\tau+1)},{ \lambda}^{(t)}) - 		f({ \theta}^{(t,\tau)}, { \lambda}^{(t)}) \\
			& \leq   -\left(\eta_t-\frac{L+1}{2}\right)\sum_{\tau=0}^{\kappa(t)-1}  \left \lVert  { \theta}^{(t,\tau+1)}-{ \theta}^{(t,\tau)} \right \rVert^2 \\
			&\quad +   \frac{ {L^2} }{2}\sum_{\tau=1}^{\kappa(t)-1}   \sum_{k\neq \psi(t,\tau)}\left\lVert  \sum_{m=0 }^{\tau-1} \left({ \theta}^{(t,m+1)} -{ \theta}^{(t,m)} \right)_k \right\rVert^2 .
		\end{split}
	\end{equation*}
	\end{small}
	Since $   \left({ \theta}^{(t,m+1)} -{ \theta}^{(t,m)}  \right)_k   = 0$ when $k\neq \psi(t,m)$,
	and
$
		K\leq \kappa(t) \leq KQ,
$
	we have
	\begin{equation*}
		\begin{split}
			&	f({ \theta}^{(t,\kappa(t))}, { \lambda}^{(t)}) - 		f({ \theta}^{(t)}, { \lambda}^{(t)})\\
			&\leq -\left(\eta_t-\frac{L+1}{2}\right)\sum_{\tau=0}^{\kappa(t)-1}  \left \lVert  { \theta}^{(t,\tau+1)}- { \theta}^{(t,\tau)}  \right\rVert^2\\
			&\quad +  \frac{{L^2}(KQ+2)(KQ-1)}{4} \sum_{\tau=0}^{\kappa(t)-1}\left\lVert  \theta^{(t,\tau+1)} - {\theta}^{(t,\tau)}   \right\rVert^2    \\
			& =  -\iota_t\sum_{\tau=0}^{\kappa(t)-1}  \left \lVert  { \theta}^{(t,\tau+1)}- { \theta}^{(t,\tau)}  \right\rVert^2.
		\end{split}
	\end{equation*} \end{proof}

\begin{lemma}[Dual Progress]\label{lem:dual-progress}
	Suppose Assumption \ref{assum:smoothness} holds. If
\begin{equation}\label{condition:beta}
 \beta \geq \frac{\tilde{L}_\lambda +c_1}{2}
\end{equation} 
where $\tilde{L}_\lambda = L_\lambda  + c_1$,
then, for all $t\geq 0$ and $a_t > 0$, it holds  that
	\begin{equation}
		\begin{split}
		&	{f}({ \theta}^{(t+1)},{ \lambda}^{(t+1)})-{f}({ \theta}^{(t+1)},{ \lambda}^{(t)}) \\
			&\leq \frac{L_{12}^2\kappa(t)}{2a_t}  \sum_{\tau=0}^{\kappa(t)-1}\left\lVert { \theta}^{(t,\tau+1)} -{ \theta}^{(t,\tau)}\right\rVert^2 \\
			&\quad +  \left(\frac{\beta}{2}-\frac{c_{t-1}-c_t}{2}+\frac{a_t}{2}\right) \left\lVert { { \lambda}}^{(t+1)}- { \lambda}^{(t)}  \right\rVert^2  \\
			&\quad + \frac{c_{t-1}}{2}\left(\left\lVert { \lambda}^{(t+1)} \right\rVert^2 - \left\lVert { \lambda}^{(t)} \right\rVert^2  \right) + \frac{\beta}{2} \left\lVert { { \lambda}}^{(t)}- { \lambda}^{(t-1)}  \right\rVert^2.
		\end{split}
	\end{equation}
\end{lemma}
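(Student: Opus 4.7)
The plan is to extract the telescoping $\|\lambda\|^2$ contribution via the identity $f = \tilde f_{t-1} + (c_{t-1}/2)\|\lambda\|^2$, and then to bound the residual change in $\tilde f_{t-1}$ by combining its $\lambda$-smoothness with the first-order optimality of the previous projected dual step. Writing
\[
	f(\theta^{(t+1)},\lambda^{(t+1)}) - f(\theta^{(t+1)},\lambda^{(t)}) = \bigl[\tilde f_{t-1}(\theta^{(t+1)},\lambda^{(t+1)}) - \tilde f_{t-1}(\theta^{(t+1)},\lambda^{(t)})\bigr] + \tfrac{c_{t-1}}{2}\bigl(\|\lambda^{(t+1)}\|^2 - \|\lambda^{(t)}\|^2\bigr)
\]
immediately yields the third term on the right-hand side of the lemma, so the whole task reduces to bounding the bracketed quantity.

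For the bracketed quantity I would apply the descent lemma for $\tilde f_{t-1}$, whose $\lambda$-smoothness modulus is at most $\tilde L_\lambda = L_\lambda + c_1$, and then anchor the gradient $\nabla_\lambda \tilde f_{t-1}(\theta^{(t+1)},\lambda^{(t)})$ at the point $(\theta^{(t)},\lambda^{(t-1)})$ where the previous dual update was performed. The key decomposition is
\[
	\nabla_\lambda \tilde f_{t-1}(\theta^{(t+1)},\lambda^{(t)}) = \nabla_\lambda \tilde f_{t-1}(\theta^{(t)},\lambda^{(t-1)}) + H_\theta + H_\lambda,
\]
with $\|H_\theta\| \le L_{12}\|\theta^{(t+1)}-\theta^{(t)}\|$ and $\|H_\lambda\| \le \tilde L_\lambda\|\lambda^{(t)}-\lambda^{(t-1)}\|$ by Assumption \ref{assum:smoothness}. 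Since the anchor gradient is exactly the one that produced $\lambda^{(t)} = [\lambda^{(t-1)} + \beta\nabla_\lambda \tilde f_{t-1}(\theta^{(t)},\lambda^{(t-1)})]_+$, the first-order optimality of the projection, evaluated at the feasible point $\lambda^{(t+1)}\ge 0$, yields
\[
	\langle \nabla_\lambda \tilde f_{t-1}(\theta^{(t)},\lambda^{(t-1)}), \lambda^{(t+1)}-\lambda^{(t)}\rangle \le \tfrac{1}{\beta}\langle \lambda^{(t)}-\lambda^{(t-1)}, \lambda^{(t+1)}-\lambda^{(t)}\rangle.
\]
I would then pay off each inner product by Young's inequality: the $H_\theta$-term with parameter $a_t$, together with the Jensen-type bound $\|\theta^{(t+1)}-\theta^{(t)}\|^2 \le \kappa(t)\sum_\tau\|\theta^{(t,\tau+1)}-\theta^{(t,\tau)}\|^2$, produces the first two terms in the lemma, while the projection-derived cross term is bounded by Young's inequality with an asymmetric parameter of order $\beta^2$, so that $\|\lambda^{(t)}-\lambda^{(t-1)}\|^2$ receives exactly the coefficient $\beta/2$ and the residual on $\|\lambda^{(t+1)}-\lambda^{(t)}\|^2$ is of order $1/\beta^3$.

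The hard part will be the careful bookkeeping needed to collect every residual $\|\lambda^{(t+1)}-\lambda^{(t)}\|^2$ contribution, coming from the descent-lemma slack $\tilde L_\lambda/2$, the Young's estimate on $H_\lambda$, and the Young's residual from the projection cross term, into the single coefficient $\beta/2 - (c_{t-1}-c_t)/2 + a_t/2$. Here the assumption $\beta \ge (\tilde L_\lambda + c_1)/2$ is essential: it ensures that $\tilde L_\lambda/2$ and the $1/\beta$-order residuals can be swallowed by $\beta/2$, while the gap $(c_{t-1}-c_t)/2$ is precisely what is left over when aligning the $\tilde f_{t-1}$ regularization used at the anchor with the $\tilde f_t$ regularization associated with the current dual step. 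Once these coefficients are matched, the claim follows immediately.
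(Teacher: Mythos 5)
Your overall skeleton (peel off the $\tfrac{c_{t-1}}{2}\lVert\lambda\rVert^2$ telescoping part, linearize in $\lambda$, anchor the gradient at $(\theta^{(t)},\lambda^{(t-1)})$, and invoke the optimality of the previous projected dual step) matches the paper's, but the step you defer as ``careful bookkeeping'' is exactly where your argument fails: the stated coefficients cannot be reached by the descent lemma plus Young's inequality. Concretely, (i) replacing concavity of $\tilde f_t(\theta,\cdot)$ by the smoothness descent lemma introduces an extra $+\tfrac{\tilde L_\lambda}{2}\lVert\lambda^{(t+1)}-\lambda^{(t)}\rVert^2$; (ii) bounding $\langle H_\lambda,\lambda^{(t+1)}-\lambda^{(t)}\rangle$ by Cauchy--Schwarz and Young necessarily adds a further positive multiple of $\tilde L_\lambda$ split between $\lVert\lambda^{(t)}-\lambda^{(t-1)}\rVert^2$ and $\lVert\lambda^{(t+1)}-\lambda^{(t)}\rVert^2$, on top of the $\tfrac{\beta}{2}\lVert\lambda^{(t)}-\lambda^{(t-1)}\rVert^2$ you have already spent on the projection cross term; and (iii) your lopsided Young bound on that cross term leaves a residual of order $\beta^{-3}\lVert\lambda^{(t+1)}-\lambda^{(t)}\rVert^2$. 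Under \eqref{condition:beta} one only knows $\tilde L_\lambda\le 2\beta-c_1$, so these extras push the coefficient of $\lVert\lambda^{(t+1)}-\lambda^{(t)}\rVert^2$ up to roughly $\beta+\tfrac{a_t}{2}$ rather than the required $\tfrac{\beta}{2}-\tfrac{c_{t-1}-c_t}{2}+\tfrac{a_t}{2}$, and nothing in your argument produces a negative term to absorb the surplus. This is not a cosmetic constant: the downstream Lyapunov argument must extract a net $-\tfrac{2\beta}{5}\lVert\lambda^{(t+1)}-\lambda^{(t)}\rVert^2$, so a factor-two loss here breaks the telescoping.

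The paper closes all three leaks with a more delicate decomposition. It uses concavity of $\tilde f_t(\theta^{(t+1)},\cdot)$, so the linearization is an upper bound with no quadratic slack. It then splits the test direction as $\lambda^{(t+1)}-\lambda^{(t)}=(\lambda^{(t)}-\lambda^{(t-1)})+\bigl[(\lambda^{(t+1)}-\lambda^{(t)})-(\lambda^{(t)}-\lambda^{(t-1)})\bigr]$ when pairing it with $\Delta g:=\nabla_\lambda\tilde f_{t-1}(\theta^{(t)},\lambda^{(t)})-\nabla_\lambda\tilde f_{t-1}(\theta^{(t)},\lambda^{(t-1)})$: the first pairing is controlled by the co-coercivity inequality for smooth strongly concave functions (Nesterov's Theorem 2.1.12), which supplies a \emph{negative} term $-\tfrac{1}{\tilde L_\lambda+c_1}\lVert\Delta g\rVert^2$, while the second is bounded by Young's inequality with parameter $\beta$ into $\tfrac{1}{2\beta}\lVert\Delta g\rVert^2+\tfrac{\beta}{2}\lVert(\lambda^{(t+1)}-\lambda^{(t)})-(\lambda^{(t)}-\lambda^{(t-1)})\rVert^2$; condition \eqref{condition:beta} is invoked precisely so that $\tfrac{1}{2\beta}\le\tfrac{1}{\tilde L_\lambda+c_1}$ cancels the $\lVert\Delta g\rVert^2$ contribution. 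Finally the projection cross term $\beta\langle\lambda^{(t)}-\lambda^{(t-1)},\lambda^{(t+1)}-\lambda^{(t)}\rangle$ is expanded \emph{exactly} by polarization, yielding $\tfrac{\beta}{2}\lVert\lambda^{(t)}-\lambda^{(t-1)}\rVert^2+\tfrac{\beta}{2}\lVert\lambda^{(t+1)}-\lambda^{(t)}\rVert^2$ together with $-\tfrac{\beta}{2}\lVert(\lambda^{(t+1)}-\lambda^{(t)})-(\lambda^{(t)}-\lambda^{(t-1)})\rVert^2$, which exactly cancels the Young residual above. Without the co-coercivity step and the exact polarization expansion, the coefficients in the lemma are out of reach, so you need to rework the dual-increment bookkeeping along these lines rather than via generic Young estimates.
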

\begin{proof}[Proof of Lemma \ref{lem:dual-progress}]\let\qed\relax
 When there is no ambiguity, denote  by $\cdot^{(+)}$, $\cdot$, $\cdot^{(-)}$ the variables at time $t+1$, $t$, and $t-1$, respectively.
 	Recall
$
		\tilde{f}_t({ \theta},{ \lambda})=f({ \theta},{ \lambda})-\frac{c_{t}}{2}\lVert{ \lambda} \rVert^2.
$
Letting $\tilde{L}_{{ \lambda}}= {L_{{ \lambda}}+c_{1}}$, by Assumption \ref{assum:smoothness}, we have
$
		\left\lVert	\nabla_{{ \lambda}}\tilde{f}_{t-1}({ \theta},{ \lambda})-\nabla_{\lambda}\tilde{f}_{t-1}({ \theta},{ \lambda}^{(-)}) \right\rVert \leq \tilde{L}_{{ \lambda}} \left\lVert{ \lambda}-{ \lambda}^{(-)}\right\rVert.
$
	By the strong concavity of $\tilde{f}_t(\theta,\lambda)$ with respect to $\lambda$, we have \cite[Theorem 2.1.12]{nesterov2003introductory}
	\begin{equation}\label{ineq:smooth-strong concavity}
		\begin{split}
			&\left\langle  \nabla_{{ \lambda}}\tilde{f}_{t-1}({ \theta},{ \lambda}) -\nabla_{{ \lambda}}\tilde{f}_{t-1}({ \theta},{ \lambda}^{(-)}), { \lambda}-{ \lambda}^{(-)}\right\rangle \\
			& \leq - \frac{c_{t-1}\tilde{L}_{{ \lambda}}}{c_{t-1}+\tilde{L}_{{ \lambda}}} \left \lVert  { \lambda}-{ \lambda}^{(-)} \right\rVert^2  \\
			& \quad   -\frac{1}{\tilde{L}_{{ \lambda}}+c_{t-1}} \left\lVert  \nabla_{{ \lambda}}\tilde{f}_{t-1}({ \theta},{ \lambda}) -\nabla_{{ \lambda}}\tilde{f}_{t-1}({ \theta}, { \lambda}^{(-)})\right\rVert^2 .
		\end{split}
	\end{equation}
Due to
$
 \lambda = \left[\lambda^{(-)} + \beta \nabla_{{ \lambda}} 
\tilde{f}_{t-1}(\theta,\lambda^{(-)} )\right]_+,
$
we have, by optimality, that
	\begin{equation}
	\left\langle \nabla_{{ \lambda}} \tilde{f}_{t-1}({ \theta} , { \lambda}^{(-)})- \beta ({ \lambda}- { \lambda}^{(-)}), \hat{ \lambda} -{ \lambda} \right\rangle \leq 0, \,\, \forall \hat{\lambda}. \label{dual_optimality_t}
	\end{equation}
	Using \eqref{dual_optimality_t}, we have
	\begin{equation}\label{decreasing-modified-dual}
		\begin{split}
			&\tilde{f}_{t}({ \theta}^{(+)},{ \lambda}^{(+)})-\tilde{f}_t({ \theta}^{(+)},{ \lambda}) \leq \left\langle  \nabla_{{ \lambda}}\tilde{f}_t({ \theta}^{(+)},{ \lambda}),  { \lambda}^{(+)}-{ \lambda} \right\rangle \\
			&\leq      \left\langle  \nabla_{{ \lambda}}\tilde{f}_t({ \theta}^{(+)},{ \lambda})- \nabla_{{ \lambda}}\tilde{f}_{t-1}({ \theta},{ \lambda}^{(-)}),  { \lambda}^{(+)}- { \lambda} \right\rangle  \\
			& \quad +  \beta \left\langle  { \lambda}- { \lambda}^{(-)},   { \lambda}^{(+)}- { \lambda} \right \rangle .
		\end{split}
	\end{equation}
For the second term on the right-hand side of \eqref{decreasing-modified-dual}, we have
	\begin{equation*}
		\begin{split}
		&	2 \left\langle { \lambda}-{ \lambda}^{(-)},  { \lambda}^{(+)}- { \lambda} \right\rangle \\
			&=  \left\lVert { \lambda}- { \lambda}^{(-)} \right\rVert^2 - \left\lVert      { \lambda}^{(+)}- { \lambda} -\left ({ \lambda}-{ \lambda}^{(-)}\right)   \right\rVert^2 +\left\lVert   { \lambda}^{(+)}- { \lambda} \right\rVert^2 
		\end{split} 
	\end{equation*}
For the first term on the right-hand side of \eqref{decreasing-modified-dual}, we consider
\begin{small}
	\begin{equation*}
		\begin{split}
			&\left\langle  \nabla_{{ \lambda}}\tilde{f}_t({ \theta}^{(+)},{ \lambda})- \nabla_{{ \lambda}}\tilde{f}_{t-1}({ \theta},{ \lambda}^{(-)}),  { \lambda}^{(+)}- { \lambda} \right\rangle \\
		&	=  \underbrace{\left\langle \nabla_{{ \lambda}}\tilde{f}_t({ \theta}^{(+)},{ \lambda})- \nabla_{{ \lambda}}\tilde{f}_{t-1}({ \theta},{ \lambda}),  { \lambda}^{(+)}- { \lambda} \right\rangle}_{(I )} 			\\
		& \quad + \underbrace{\left\langle \nabla_{{ \lambda}}\tilde{f}_{t-1}({ \theta},{ \lambda})- \nabla_{{ \lambda}}\tilde{f}_{t-1}({ \theta},{ \lambda}^{(-)}),  { \lambda}-{ \lambda}^{(-)} \right\rangle}_{(II)} \\
			&\quad + \underbrace{ \left\langle \nabla_{{ \lambda}}\tilde{f}_{t-1}({ \theta},{ \lambda})- \nabla_{{ \lambda}}\tilde{f}_{t-1}({ \theta},{ \lambda}^{(-)}),{ \lambda}^{(+)}- { \lambda} - ({ \lambda}-{ \lambda}^{(-)}) \right\rangle}_{(III)} .
		\end{split}
	\end{equation*}
	\end{small}
	For $(I)$, we have
	\begin{equation*}
		\begin{split}
			& \left\langle \nabla_{{ \lambda}}\tilde{f}_t({ \theta}^{(+)},{ \lambda})- \nabla_{{ \lambda}}\tilde{f}_{t-1}({ \theta},{ \lambda}),  { \lambda}^{(+)}- { \lambda} \right\rangle \\
			&\overset{(i)}{=} \left\langle \nabla_{{ \lambda}}{f}({ \theta}^{(+)},{ \lambda})- \nabla_{{ \lambda}}{f}({ \theta},{ \lambda}),  { \lambda}^{(+)}- { \lambda} \right\rangle  \\
			& \quad + \frac{c_{t-1}-c_t}{2} \left\langle { \lambda} , { \lambda}^{(+)}-{ \lambda}\right\rangle \\
			&\overset{(ii)}{\leq}   \frac{L_{12}^2}{2a_t} \left\lVert { \theta}^{(+)} -{ \theta}\right\rVert^2 + \frac{a_t}{2} \left\lVert  { { \lambda}}^{(+)}- { \lambda}  \right\rVert^2 \\
			& \quad - \frac{c_{t-1}-c_t}{2} \left\lVert  { \lambda}^{(+)}-  { \lambda} \right\rVert^2 + \frac{c_{t-1}-c_t}{2}\left(\left\lVert { \lambda}^{(+)} \right\rVert^2 - \left\lVert { \lambda} \right\rVert^2   \right)  \\
			&{\leq}   \frac{L_{12}^2\kappa(t)}{2a_t}  \sum_{\tau=0}^{\kappa(t)-1}\left\lVert { \theta}^{(t,\tau+1)} -{ \theta}^{(t,\tau)}\right\rVert^2 + \frac{a_t}{2} \left\lVert { { \lambda}}^{(+)}- { \lambda}  \right\rVert^2\\
			& \quad + \frac{c_{t-1}-c_t}{2}\left(\left\lVert { \lambda}^{(+)} \right\rVert^2 - \left\lVert { \lambda} \right\rVert^2  \right)- \frac{c_{t-1}-c_t}{2}\left\lVert  { \lambda}^{(+)}-  { \lambda} \right\rVert^2
		\end{split}
	\end{equation*}
	where $(i)$ follows from the definition of $\tilde{f}_t$, $(ii)$ uses Assumption \ref{assum:smoothness}.
	For $(II)$, from \eqref{ineq:smooth-strong concavity} we readily have
	\begin{equation*}
		\begin{split}
			&\left\langle  \nabla_{{ \lambda}}\tilde{f}_{t-1}({ \theta},{ \lambda}) -\nabla_{{ \lambda}}\tilde{f}_{t-1}({ \theta},{ \lambda}^{(-)}), { \lambda}-{ \lambda}^{(-)}\right\rangle  \\
			&
			\leq -\frac{1}{\tilde{L}_{{ \lambda}}+c_{1}} \left\lVert  \nabla_{{ \lambda}}\tilde{f}_{t-1}({ \theta},{ \lambda}) -\nabla_{{ \lambda}}\tilde{f}_{t-1}({ \theta}, { \lambda}^{(-)})\right\rVert^2. 
		\end{split}
	\end{equation*}
	For $(III)$, we have
\begin{equation*}
	\begin{split}
		&\left\langle \nabla_{{ \lambda}}\tilde{f}_{t-1}({ \theta},{ \lambda})- \nabla_{{ \lambda}}\tilde{f}_{t-1}({ \theta},{ \lambda}^{(-)}), { \lambda}^{(+)}- { \lambda} - ({ \lambda}- { \lambda}^{(-)}) \right\rangle  \\
		& \leq  \frac{1}{2\beta}\left\lVert  \nabla_{{ \lambda}} \tilde{f}_{t-1} ({ \theta} , { \lambda}) -   \nabla_{{ \lambda}} \tilde{f}_{t-1} ({ \theta} , { \lambda}^{(-)})  \right\rVert^2 \\
		& \quad + \frac{\beta}{2} \left\lVert     { \lambda}^{(+)}- { \lambda} - \left({ \lambda}- { \lambda}^{(-)}\right) \right \rVert^2
	\end{split}
\end{equation*}
by the Cauchy-Schwartz inequality.
Upon using the above bounds for $(I)$, $(II)$ and $(III)$ and that $\frac{1}{2\beta} \leq \frac{1}{\tilde{L}_{ { \lambda}}  + c_1}$,  we obtain from \eqref{decreasing-modified-dual} that
	\begin{equation*}
		\begin{split}
		&	\tilde{f}_{t}({ \theta}^{(+)},{ \lambda}^{(+)})-\tilde{f}_t({ \theta}^{(+)},{ \lambda})  \\
		&	\leq  \frac{L_{12}^2\kappa(t)}{2a_t}  \sum_{\tau=0}^{\kappa(t)-1}\left\lVert { \theta}^{(t,\tau+1)} -{ \theta}^{(t,\tau)}\right\rVert^2 \\
		& \quad +  \frac{c_{t-1}-c_t}{2}\left(\left\lVert { \lambda}^{(+)} \right\rVert^2 - \left\lVert { \lambda} \right\rVert^2  \right)  \\
			& \quad + \frac{\beta}{2} \left\lVert { { \lambda}}- { \lambda}^{(-)}  \right\rVert^2+ \left(\frac{\beta}{2}-\frac{c_{t-1}-c_t}{2}+\frac{a_t}{2}\right) \left\lVert { { \lambda}}^{(+)}- { \lambda}  \right\rVert^2  ,
		\end{split}
	\end{equation*}
	which gives \eqref{lem:dual-progress} as desired.
\end{proof}

{\bf Lyapunuov function.} To proceed, we define a Lyapunuov function
\begin{equation*}
\begin{split}
 	F^{(t+1)} &= {f}( { \theta}^{(t+1)}, { \lambda}^{(t+1)}) + S^{(t+1)} \\
 	& \quad - {4\beta} \left\lVert   { \lambda}^{(t+1)} -  { \lambda}^{(t)}  \right\rVert^2 - \frac{c_t}{2} \left\lVert   { \lambda}^{(t+1)}  \right\rVert^2
\end{split}
\end{equation*}
where 
\begin{equation}\label{def:S}
	S^{(t+1)} = \frac{4\beta^2}{c_{t+1}}\left \lVert    { \lambda}^{(t+1)}-  { \lambda}^{(t)} \right\rVert^2 - 4\beta\left(  \frac{c_{t-1}}{c_t} -1\right) \left\lVert   { \lambda}^{(t+1)} \right\rVert^2. 
\end{equation}
For $F^{(t)}$, we have the following lemma.

\begin{lemma}[Decreasing Lyapunuov Function]\label{lem:Lyapunuov_func}
Suppose the premise of Lemma \ref{lem:dual-progress} holds. If
	\begin{equation}\label{condition:c}
	\frac{1}{c_{t+1}}-\frac{1}{c_t} \leq \frac{2}{5\beta},
\end{equation}
then, for any $t\geq 0$, it holds that
	\begin{equation}\label{Lyapunuov_function}
		\begin{split}
			& -\left(\iota_t  - \frac{L_{12}^2\kappa(t)}{2\beta} -  \frac{32\beta L_{12}^2\kappa(t)}{c_t^2} \right)\sum_{\tau=0}^{\kappa(t)-1}\left\lVert  { \theta}^{(t,\tau+1)} - { \theta}^{(t,\tau)}\right\rVert^2 \\
			&  + \frac{c_{t-1}-c_t}{2}\left\lVert  { \lambda}^{(t+1)} \right\rVert^2 - \frac{2\beta}{5}  \left\lVert  { \lambda}^{(t+1)} -  { \lambda}^{(t)}  \right\rVert^2 \\
			& + 4\beta \left( \frac{c_{t-2}}{c_{t-1}}-\frac{c_{t-1}}{c_t}  \right) \left\lVert  { \lambda}^{(t)}\right\rVert^2 \\
			& \geq F^{(t+1)} - F^{(t)}
		\end{split}
	\end{equation}
	where $\iota_t$ is defined in Lemma \ref{lem:primal}.
\end{lemma}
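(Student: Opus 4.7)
The plan is to bound $F^{(t+1)} - F^{(t)}$ by expanding its definition and handling each of the resulting pieces:
\begin{align*}
F^{(t+1)} - F^{(t)} &= \underbrace{[f(\theta^{(t+1)}, \lambda^{(t+1)}) - f(\theta^{(t)}, \lambda^{(t)})]}_{(\mathrm{I})} + \underbrace{[S^{(t+1)} - S^{(t)}]}_{(\mathrm{II})} \\
&\quad \underbrace{- 4\beta\bigl[\|\lambda^{(t+1)}-\lambda^{(t)}\|^2 - \|\lambda^{(t)}-\lambda^{(t-1)}\|^2\bigr]}_{(\mathrm{III})} \underbrace{- \tfrac{c_t}{2}\|\lambda^{(t+1)}\|^2 + \tfrac{c_{t-1}}{2}\|\lambda^{(t)}\|^2}_{(\mathrm{IV})}.
\end{align*}
For (I), I would split at the midpoint $(\theta^{(t+1)},\lambda^{(t)})$, apply Lemma~\ref{lem:primal} to the primal move, and Lemma~\ref{lem:dual-progress} with the particular choice $a_t=\beta$ to the dual move. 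The $\frac{L_{12}^2\kappa(t)}{2\beta}$ contribution then matches one of the primal terms in the target bound verbatim, and the $\frac{c_{t-1}}{2}(\|\lambda^{(t+1)}\|^2 - \|\lambda^{(t)}\|^2)$ piece telescopes against (IV) to leave exactly $\frac{c_{t-1}-c_t}{2}\|\lambda^{(t+1)}\|^2$.

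Next I would expand $S^{(t+1)} - S^{(t)}$ from \eqref{def:S}. Collecting all contributions to $\|\lambda^{(t)}-\lambda^{(t-1)}\|^2$ (from Lemma~\ref{lem:dual-progress}, (III), and the $-\frac{4\beta^2}{c_t}$ term of (II)) yields a coefficient bounded by $\frac{9\beta}{2} - \frac{4\beta^2}{c_t}$; condition \eqref{condition:beta}, which forces $\beta/c_t$ to be sufficiently large (since $c_t \le c_1$), makes this non-positive and lets the term be dropped. Collecting contributions to $\|\lambda^{(t+1)}-\lambda^{(t)}\|^2$ gives a coefficient of the form $\frac{4\beta^2}{c_{t+1}} - 3\beta - \frac{c_{t-1}-c_t}{2}$; here I would invoke condition \eqref{condition:c} in the rewritten form $\frac{4\beta^2}{c_{t+1}} \le \frac{4\beta^2}{c_t} + \frac{8\beta}{5}$, and then use the lower bound on $\beta/c_t$ from \eqref{condition:beta} once more, to reduce the coefficient to at most $-\frac{2\beta}{5}$.

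Comparing the remaining $\|\lambda\|^2$ contributions against the target reveals a residue of $4\beta\frac{c_{t-1}-c_t}{c_t}\bigl(\|\lambda^{(t)}\|^2 - \|\lambda^{(t+1)}\|^2\bigr)$ that must be absorbed. Using the identity $\|\lambda^{(t)}\|^2 - \|\lambda^{(t+1)}\|^2 = -2\langle \lambda^{(t)}, \lambda^{(t+1)}-\lambda^{(t)}\rangle - \|\lambda^{(t+1)}-\lambda^{(t)}\|^2$ followed by Young's inequality, this residue splits into a small $\|\lambda^{(t)}\|^2$ error (absorbable into the slack of the $\frac{c_{t-1}-c_t}{2}$ coefficient) and an $\|\lambda^{(t+1)}-\lambda^{(t)}\|^2$ error with coefficient of order $\beta/c_t^2$.

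The main obstacle is this last $\|\lambda^{(t+1)}-\lambda^{(t)}\|^2$ error: to obtain the exact constant $-\frac{32\beta L_{12}^2 \kappa(t)}{c_t^2}$ in the primal coefficient, one needs a bound of the form $\|\lambda^{(t+1)}-\lambda^{(t)}\|^2 \lesssim \frac{\kappa(t)}{c_t^2}\sum_{\tau=0}^{\kappa(t)-1}\|\theta^{(t,\tau+1)} - \theta^{(t,\tau)}\|^2$. I would derive this from the projected dual update \eqref{dual_update}: by non-expansiveness of $[\cdot]_+$ and the strong-concavity-induced contraction factor $(1-\beta c_t)$, $\|\lambda^{(t+1)}-\lambda^{(t)}\|$ is controlled by $\beta \|\nabla_\lambda f(\theta^{(t+1)},\lambda^{(t)}) - \nabla_\lambda f(\theta^{(t)},\lambda^{(t)})\|/c_t$, which by the $L_{12}$-Lipschitz assumption and $\theta^{(t+1)} - \theta^{(t)} = \sum_\tau(\theta^{(t,\tau+1)} - \theta^{(t,\tau)})$ together with Cauchy--Schwarz produces the claimed bound. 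This is the delicate bookkeeping step; every other piece is routine Young's inequality plus constant manipulation using the two hypothesized conditions on $\beta$ and $\{c_t\}$.
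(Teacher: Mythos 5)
Your overall decomposition of $F^{(t+1)}-F^{(t)}$ and your treatment of the $f$-part (Lemma \ref{lem:primal} plus Lemma \ref{lem:dual-progress} with $a_t=\beta$, telescoping the $\tfrac{c_{t-1}}{2}$ regularization terms against the explicit $\lVert\lambda\rVert^2$ terms of the Lyapunov function) match the paper. The gap is in the piece you yourself flag as the crux: the control of $S^{(t+1)}-S^{(t)}$ and, in particular, the origin of the $\tfrac{32\beta L_{12}^2\kappa(t)}{c_t^2}$ coefficient. You propose the direct per-round bound $\lVert\lambda^{(t+1)}-\lambda^{(t)}\rVert^2\lesssim \tfrac{\kappa(t)}{c_t^2}\sum_{\tau}\lVert\theta^{(t,\tau+1)}-\theta^{(t,\tau)}\rVert^2$ via non-expansiveness of the projection and a ``contraction factor $(1-\beta c_t)$.'' This does not hold here: $\lambda^{(t)}$ is a single projected ascent step from $\lambda^{(t-1)}$, not the maximizer of $\tilde f_{t-1}(\theta^{(t)},\cdot)$, so the strong-concavity perturbation bound you invoke is unavailable; moreover the updates producing $\lambda^{(t)}$ and $\lambda^{(t+1)}$ start from different base points, so $\lVert\lambda^{(t+1)}-\lambda^{(t)}\rVert$ necessarily carries a $\lVert\lambda^{(t)}-\lambda^{(t-1)}\rVert$ contribution that cannot be expressed through the change in $\theta$ alone. (Note also that $\beta\geq L_\lambda$ is a \emph{lower} bound on the step size, so the ascent map need not be contractive.) The paper's mechanism is different: it subtracts the variational inequalities \eqref{dual_optimality_t} and \eqref{dual_optimality_t+1} at consecutive rounds, reruns the $(I)$--$(III)$ decomposition of Lemma \ref{lem:dual-progress} on the result with $a_t=c_t/8$, and obtains the recursion $\tfrac{\beta}{2}\lVert\lambda^{(t+1)}-\lambda^{(t)}\rVert^2\leq(\tfrac{\beta}{2}-\tfrac{c_t}{2})\lVert\lambda^{(t)}-\lambda^{(t-1)}\rVert^2+\tfrac{L_{12}^2\kappa(t)}{2a_t}\sum_{\tau}\lVert\theta^{(t,\tau+1)}-\theta^{(t,\tau)}\rVert^2+\cdots$; multiplying by $8\beta/c_t$ simultaneously produces the $\tfrac{32\beta L_{12}^2\kappa(t)}{c_t^2}$ term and lets the $\tfrac{4\beta^2}{c_{t+1}}\lVert\lambda^{(t+1)}-\lambda^{(t)}\rVert^2$ term in $S^{(t+1)}$ be traded against the $-\tfrac{4\beta^2}{c_t}\lVert\lambda^{(t)}-\lambda^{(t-1)}\rVert^2$ term in $-S^{(t)}$, leaving only $4\beta^2(\tfrac{1}{c_{t+1}}-\tfrac{1}{c_t})\leq\tfrac{8\beta}{5}$ by \eqref{condition:c}.

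Without that recursion your coefficient bookkeeping cannot close. You assert that \eqref{condition:beta} ``forces $\beta/c_t$ to be sufficiently large''; it does not---it only relates $\beta$ to $L_\lambda$ and $c_1$. And the $\lVert\lambda^{(t+1)}-\lambda^{(t)}\rVert^2$ coefficient you collect, $\tfrac{4\beta^2}{c_{t+1}}-3\beta-\cdots$, is a large positive quantity that grows as $c_{t+1}\to 0$; no hypothesis of the form $\beta\geq\text{const}$ can reduce it to $-\tfrac{2\beta}{5}$. The only way to tame it is the consecutive-increment cancellation described above, which is precisely what the $\tfrac{4\beta^2}{c_{t+1}}\lVert\lambda^{(t+1)}-\lambda^{(t)}\rVert^2$ term in the definition \eqref{def:S} of $S^{(t+1)}$ is designed to enable.
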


\begin{proof}[Proof of Lemma \ref{lem:Lyapunuov_func}]\let\qed\relax
Denote  by $\cdot^{(+)}$, $\cdot$, $\cdot^{(-)}$ the variables at time $t+1$, $t$, and $t-1$, respectively.
Similar to \eqref{dual_optimality_t}, we have
	\begin{equation}
			\left\langle \nabla_{{ \lambda}} \tilde{f}_{t}({ \theta}^{(+)} , { \lambda})- \beta ({ \lambda}^{(+)}-{ \lambda}), { \lambda} -{ \lambda}^{(+)} \right\rangle \leq 0  \label{dual_optimality_t+1}
	\end{equation}
by optimality at $t+1$.
	By \eqref{dual_optimality_t} and \eqref{dual_optimality_t+1}, we have
	\begin{equation*}
	\begin{split}
	&	\Big\langle \nabla_{{ \lambda}} \tilde{f}_t({ \theta}^{(+)} , { \lambda}) - \nabla_{{ \lambda}} \tilde{f}_{t-1}({ \theta} , { \lambda}^{(-)})- \beta  \left({ \lambda}^{(+)}- 2{ \lambda} -{ \lambda}^{(-)} \right) , \\
		& \quad \quad \quad \quad\quad \quad\quad \quad\quad \quad\quad \quad\quad \quad\quad \quad\quad \quad \quad\quad {\lambda}^{(+)} -{ \lambda} \Big\rangle \geq 0
	\end{split}
	\end{equation*}
	and therefore
	\begin{equation*}
		\begin{split}
			&\beta \left\langle   \left({ \lambda}^{(+)}- { \lambda} - \left({ \lambda}- { \lambda}^{(-)}\right) \right),    { \lambda}^{(+)}- { \lambda}  \right\rangle \\
			&\leq	\left\langle \nabla_{{ \lambda}}\tilde{f}_t({ \theta}^{(+)},{ \lambda})- \nabla_{ { \lambda}}\tilde{f}_{t-1}( { \theta}, { \lambda}),   { \lambda}^{(+)}-  { \lambda} \right \rangle \\
			& +  \left\langle \nabla_{ { \lambda}}\tilde{f}_{t-1}( { \theta}, { \lambda})- \nabla_{ { \lambda}}\tilde{f}_{t-1}( { \theta}, { \lambda}^{(-)}),  { \lambda}^{(+)}-  { \lambda} -\left ( { \lambda}-  { \lambda}^{(-)}\right) \right\rangle \\
			&  + \left\langle \nabla_{ { \lambda}}\tilde{f}_{t-1}( { \theta}, { \lambda})- \nabla_{ { \lambda}}\tilde{f}_{t-1}( { \theta}, { \lambda}^{(-)}),   { \lambda}- { \lambda}^{(-)} \right\rangle.
		\end{split}
	\end{equation*}
	Following the same line of reasoning in proving Lemma \ref{lem:dual-progress}, we have
	\begin{equation*}
		\begin{split}
			&-\frac{\beta}{2} \left \lVert  { \lambda}-  { \lambda}^{(-)} \right\rVert^2 + \frac{\beta}{2}\left\lVert       { \lambda}^{(+)} +  { \lambda}^{(-)}  \right\rVert^2 +\frac{\beta}{2}\left\lVert    { \lambda}^{(+)}-  { \lambda}\right \rVert^2 \\
			&\leq  \frac{L_{12}^2\kappa(t)}{2a_t}  \sum_{\tau=0}^{\kappa(t)-1}\left\lVert  { \theta}^{(t,\tau+1)} - { \theta}^{(t,\tau)}\right\rVert^2  + \frac{a_t}{2} \left\lVert   {  { \lambda}}^{(+)}-  { \lambda}  \right\rVert^2\\
			& \quad + \frac{c_{t-1}-c_t}{2}\left(\left\lVert  { \lambda}^{(+)} \right\rVert^2 - \left\lVert  { \lambda} \right\rVert^2  \right ) - \frac{c_{t-1}-c_t}{2}\left\lVert   { \lambda}^{(+)}-   { \lambda} \right\rVert^2 \\
			&\quad +\frac{\beta}{2}\left\lVert       { \lambda}^{(+)}-  { \lambda} - ( { \lambda}-  { \lambda}^{(-)})  \right \rVert^2- \frac{c_{t-1}\tilde{L}_{ { \lambda}}}{c_{t-1}+\tilde{L}_{ { \lambda}}} \left\lVert   { \lambda}- { \lambda}^{(-)} \right\rVert^2.
		\end{split}
	\end{equation*}
	Therefore
	\begin{equation*}
		\begin{split}
			& \frac{\beta}{2}\left\lVert    { \lambda}^{(+)}-  { \lambda} \right\rVert^2-  \frac{c_{t-1}-c_t}{2}\left\lVert  { \lambda}^{(+)} \right\rVert^2 \\
			& \leq  \left(\frac{\beta}{2}- \frac{c_{t-1}\tilde{L}_{ { \lambda}}}{c_{t-1}+\tilde{L}_{ { \lambda}}} \right) \left\lVert  { \lambda}-  { \lambda}^{(-)} \right\rVert^2 + \frac{a_t}{2}\left \lVert   {  { \lambda}}^{(+)}-  { \lambda} \right\rVert^2 \\
			& \quad  +  \frac{L_{12}^2\kappa(t)}{2a_t}  \sum_{\tau=0}^{\kappa(t)-1}\left\lVert  { \theta}^{(t,\tau+1)} - { \theta}^{(t,\tau)}\right\rVert^2      - \frac{c_{t-1}-c_t}{2}  \left\lVert  { \lambda} \right\rVert^2 .
		\end{split}
	\end{equation*}
	Since $c_1\leq \tilde{\mathcal{L}}_{ { \lambda}}$ and
$
		-\frac{c_{t-1} \tilde{\mathcal{L}}_{ { \lambda}}}{c_{t-1}+\tilde{\mathcal{L}}_{ { \lambda}}} \leq -\frac{c_{t-1}}{2}< -\frac{c_t}{2},
$
	we have
	\begin{equation*}
		\begin{split}
			& \frac{\beta}{2}  \left\lVert    { \lambda}^{(+)}-  { \lambda} \right\rVert^2-  \frac{c_{t-1}-c_t}{2}\left\lVert  { \lambda}^{(+)} \right\rVert^2 \\
			& \leq  \left(\frac{\beta}{2} - \frac{c_t}{2} \right) \left\lVert  { \lambda}-  { \lambda}^{(-)} \right\rVert^2 - \frac{c_{t-1}-c_t}{2}\left\lVert  { \lambda} \right\rVert^2  \\
			& \quad 
			+   \frac{L_{12}^2\kappa(t)}{2a_t}  \sum_{\tau=0}^{\kappa(t)-1}\left\lVert  { \theta}^{(t,\tau+1)} - { \theta}^{(t,\tau)}\right\rVert^2   + \frac{a_t}{2}\left \lVert   {  { \lambda}}^{(+)}-  { \lambda} \right \rVert^2  .
		\end{split}
	\end{equation*}
	Upon multiplying $ {8\beta}/{c_t}$ on both sides, we have
	\begin{equation*}
		\begin{split}
			& \frac{4\beta^2}{c_t}  \left\lVert    { \lambda}^{(+)}-  { \lambda} \right\rVert^2-  \frac{4\beta(c_{t-1}-c_t)}{c_t}\left\lVert  { \lambda}^{(+)} \right\rVert^2 \\
			&\leq     \frac{4\beta L_{12}^2\kappa(t)}{a_tc_t}  \sum_{\tau=0}^{\kappa(t)-1}\left\lVert  { \theta}^{(t,\tau+1)} - { \theta}^{(t,\tau)}\right\rVert^2  + \frac{4\beta a_t}{c_t}\left \lVert   {  { \lambda}}^{(+)}-  { \lambda} \right \rVert^2 \\
			& \quad  + \left(\frac{4\beta^2}{c_t} - 4\beta\right)  \left\lVert  { \lambda}-  { \lambda}^{(-)} \right\rVert^2 - \frac{4\beta(c_{t-1}-c_t)}{c_t}\left\lVert  { \lambda} \right\rVert^2 . 
		\end{split}
	\end{equation*}
	By letting $a_t={c_t}/{8}$ and using the definition of $S$ in \eqref{def:S}, we have
	\begin{equation*}
		\begin{split}
			S^{(+)} - S &\leq 4\beta \left(  \frac{c_{t-2}}{c_{t-1}}-\frac{c_{t-1}}{c_t} \right)\left\lVert  { \lambda} \right\rVert^2  - 4\beta \left\lVert  { \lambda}-  { \lambda}^{(-)} \right\rVert^2 \\
		& \quad + \frac{32\beta L_{12}^2\kappa(t)}{c_t^2} \sum_{\tau=0}^{\kappa(t)-1}\left\lVert  { \theta}^{(t,\tau+1)} - { \theta}^{(t,\tau)}\right\rVert^2  \\
			& \quad + \left(  \frac{\beta}{2}+4\beta^2\left(\frac{1}{c_{t+1}}-\frac{1}{c_t}\right) \right)\left\lVert  { \lambda}^{(+)}-  { \lambda} \right\rVert^2.
		\end{split}
	\end{equation*}
	Upon using Lemmas \ref{lem:primal} and \ref{lem:dual-progress} and letting $a_t = \beta$, we arrive at \eqref{Lyapunuov_function} as desired.
	\end{proof}


\section{Proof of Theorem \ref{thm:convergence} }
\label{app:pf-thm}

We begin by presenting an equivalent variant of Theorem \ref{thm:convergence}, where the term $\mathcal{O}(\delta^{-4})$ is expanded.
\begin{theorem}[An equivalent variant of Theorem 2]\label{thm:convergence-expanded}
	Suppose that Assumption \ref{assum:smoothness} holds. Let $\{(\theta^{(t)}, \lambda^{(t)})\}_{t\geq 0}$ be a sequence generated by Algorithms~\ref{alg:server},~\ref{alg:active}, and~\ref{alg:passive}. If $\beta \geq L_{\lambda}$, $c_t = ({\beta}{{t}^{-1/4}})/2$, and, for some $\tau >8$,
\begin{small}
	\begin{equation*}
		\begin{split}
			\eta_t \geq \frac{L^2(KQ+2)(KQ-1) + 2(L+1)}{4}+ \frac{L_{12}^2KQ (1+32\tau \sqrt{t})}{2\beta},
		\end{split}
	\end{equation*}
\end{small}
then, for any given $\delta>0$,
		\begin{equation*}
		\begin{split}
			T(\delta)  \leq\max 
			\left\{     \left(  \frac{64(\tau-8)L_{12}^2KQd_2D}{\beta \delta^2} +2\right)^2,   \frac{\beta^4\sigma^4_{\lambda}}{\delta^4} +1    \right\}
		\end{split}
	\end{equation*}
where $\sigma_{ { \lambda}}= \max_{t\geq 0}\left\{ \lVert  { \lambda^{(t)}} \rVert  \right\}$, 
\begin{equation*}
\begin{split}
D &= F^{(3)}- \underline{f} + \left( \frac{4\beta c_{1}}{c_{2}} + \frac{c_2}{2}+ 3\rho_{2}c_{2}^2    +  4(2^{1/4}+1)\beta \right) \sigma_{ { \lambda}}^2 
\end{split}
\end{equation*}
with
$\underline{{f}} = \min_{ ({ \theta},  { \lambda})}  {f}( { \theta}, { \lambda})$, and $d_2 =\max\left\{ d_1KQ, \frac{ 5\sqrt{3}\beta^2}{32(\tau -8) L_{12}^2 K}\right\}$
with 
\begin{scriptsize}
\begin{equation*}
\begin{split}
	d_1 
	&= \frac{ {2\tau^2 }}{(\tau-8)^2}+\frac{2\beta^2\left(\frac{L+1}{2}+ \frac{(KQ+2)(KQ-1)L^2}{4}+ \frac{L_{12}^2KQ}{2\beta}\right)^2 + 3\beta^2L_{12}^2}{256(\tau-8)^2L_{12}^4K^2}.
	\end{split}
\end{equation*}
\end{scriptsize}
\end{theorem}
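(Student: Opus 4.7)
The plan is to combine Lemmas~\ref{lem:primal}--\ref{lem:Lyapunuov_func} through a telescoping argument on the Lyapunov function $F^{(t)}$, and then relate the resulting progress bounds to the stationarity gap $\nabla G$ defined in (\ref{stationarity}). First I verify the premises of Lemmas~\ref{lem:dual-progress} and~\ref{lem:Lyapunuov_func} under the stated parameter choices. Since $c_1 = \beta/2$, condition (\ref{condition:beta}) reduces to $\beta \ge (L_\lambda + \beta)/2$, i.e. exactly the hypothesis $\beta \ge L_\lambda$. For $c_t = \beta t^{-1/4}/2$, one has $1/c_{t+1} - 1/c_t = 2((t+1)^{1/4}-t^{1/4})/\beta \le 1/(2\beta t^{3/4})$, which is at most $2/(5\beta)$ as soon as $t \ge 2$, so (\ref{condition:c}) holds once the telescoping starts from $t=3$. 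Substituting $c_t^2 = \beta^2/(4\sqrt{t})$ and the prescribed $\eta_t$ into the coefficient in front of $\sum_\tau \|\theta^{(t,\tau+1)}-\theta^{(t,\tau)}\|^2$ in (\ref{Lyapunuov_function}), and using $\kappa(t)\le KQ$, I obtain the clean lower bound $C_t := 16(\tau-8)L_{12}^2 KQ \sqrt{t}/\beta$ whenever $\tau>8$.

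Next I telescope (\ref{Lyapunuov_function}) over $t=3,\dots,T-1$. Two residual sums on the right-hand side do not telescope automatically: $\sum_t (c_{t-1}-c_t)\|\lambda^{(t+1)}\|^2/2$ and $4\beta\sum_t (c_{t-2}/c_{t-1}-c_{t-1}/c_t)\|\lambda^{(t)}\|^2$. Using the uniform bound $\|\lambda^{(t)}\| \le \sigma_\lambda$ (treated as a problem-dependent constant in the theorem), the first telescopes to at most $c_2\sigma_\lambda^2/2$; since the ratio $c_{t-1}/c_t = (t/(t-1))^{1/4}$ decreases monotonically, the second telescopes to $4\beta(c_1/c_2-c_{T-2}/c_{T-1})\sigma_\lambda^2 \le 4\beta(2^{1/4}-1)\sigma_\lambda^2$. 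Combining these with the standard lower bound $F^{(T)} \ge \underline{f} - \mathcal{O}(\sigma_\lambda^2)$ (from the definition of $S^{(t)}$ in (\ref{def:S}) and $\|\lambda^{(t)}\|\le\sigma_\lambda$), I arrive at
\begin{equation*}
\sum_{t=3}^{T-1} C_t \sum_{\tau} \|\theta^{(t,\tau+1)}-\theta^{(t,\tau)}\|^2 + \frac{2\beta}{5}\sum_{t=3}^{T-1}\|\lambda^{(t+1)}-\lambda^{(t)}\|^2 \le D,
\end{equation*}
where $D$ matches the constant displayed in the theorem.

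The final step is to decompose the stationarity gap as $\|\nabla G(\theta^{(t)},\lambda^{(t)})\|^2 = A_t + B_t$ with $A_t = \eta_t^2\|\theta^{(t)}-\theta^{(t+1)}\|^2$ and $B_t$ the dual component. Since $\theta^{(t+1)}-\theta^{(t)}=\sum_\tau(\theta^{(t,\tau+1)}-\theta^{(t,\tau)})$ contains at most $KQ$ nonzero terms, Cauchy--Schwarz gives $A_t \le \eta_t^2 KQ \sum_\tau\|\theta^{(t,\tau+1)}-\theta^{(t,\tau)}\|^2$. For $B_t$, I compare the executed update (which uses $\nabla_\lambda\tilde{f}_{t-1} = \nabla_\lambda f - c_{t-1}\lambda$) against the target $[\lambda^{(t)}+\beta\nabla_\lambda f(\theta^{(t)},\lambda^{(t)})]_+$ via non-expansiveness of the projection and the Lipschitz assumption, yielding
\begin{equation*}
B_t \le 2\beta^{-2}(1+\beta L_\lambda)^2 \|\lambda^{(t)}-\lambda^{(t-1)}\|^2 + 2 c_{t-1}^2 \sigma_\lambda^2.
\end{equation*}
The regularization residue $2c_{t-1}^2\sigma_\lambda^2 = \beta^2\sigma_\lambda^2/(2\sqrt{t-1})$ is at most $\delta^2/2$ exactly when $t-1 \ge \beta^4\sigma_\lambda^4/\delta^4$, which produces the second argument of the maximum. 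For the remaining terms, I apply a pigeonhole argument to the combined inequality above, restricted to the tail $t \in [\lceil T/2\rceil, T-1]$: on this interval $C_t = \Theta(\sqrt{T})$, so there exists $t^*$ with both $C_{t^*}\sum_\tau \|\cdot\|^2 = \mathcal{O}(D/T)$ and $\|\lambda^{(t^*+1)}-\lambda^{(t^*)}\|^2 = \mathcal{O}(D/T)$. Since $\eta_{t^*}^2/C_{t^*} = \mathcal{O}(\sqrt{t^*}) = \mathcal{O}(\sqrt{T})$, this gives $A_{t^*} = \mathcal{O}(D/\sqrt{T})$, so enforcing this to be $\le \delta^2/2$ yields $T = \mathcal{O}(\delta^{-4})$, matching the first argument of the maximum.

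The main obstacle is the last step: a single $t^*$ must simultaneously make $A_{t^*}$ small (which benefits from $t^*$ being in a range where $\sum_t$ is dense) and make the dual regularization residue $c_{t^*-1}^2\sigma_\lambda^2$ small (which requires $t^*$ to be large). Restricting the pigeonhole search to the tail $[\lceil T/2\rceil, T-1]$ guarantees $t^* = \Omega(T)$ automatically, so both criteria are satisfied once $T$ is at least the maximum of the two arguments. Tracking the explicit constants through the amplification factors $\eta_t^2/C_t$ and the residual $\sigma_\lambda$-terms produces the precise definitions of $d_1$, $d_2$, and $D$ stated in the theorem.
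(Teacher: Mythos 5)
Your proposal follows the same architecture as the paper's proof: verify conditions \eqref{condition:beta} and \eqref{condition:c}, choose $\eta_t$ so that Lemma \ref{lem:Lyapunuov_func} leaves a residual descent coefficient of order $\sqrt{t}$, telescope $F^{(t)}$ from $t=3$ with the non-telescoping $\sigma_\lambda$-terms absorbed into $D$, and then convert the accumulated primal/dual progress into a bound on the stationarity gap. The two places you deviate are in the endgame, and both are workable. First, you bound $\nabla G$ directly by comparing the executed proximal step (driven by $\nabla_\lambda \tilde f_{t-1}$) to the target step (driven by $\nabla_\lambda f$) via non-expansiveness, absorbing the regularization residue $c_{t-1}^2\sigma_\lambda^2$ into the second argument of the max; the paper instead introduces the surrogate gap $\nabla\tilde G$ and the correction \eqref{gradient-relation}, which is the same idea packaged differently. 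Second, you extract a single good iterate $t^*$ by pigeonhole over the tail $[\lceil T/2\rceil, T-1]$, whereas the paper lower-bounds the weighted sum $\frac{\delta^2}{4}\sum_{t=3}^{\tilde T(\delta)}\gamma_t^{-1}$ using the first-hitting-time definition and $\sum_t t^{-1/2}\geq\sqrt{T}-2$; both give $\eta_{t}^2\,\Sigma_t = \mathcal{O}(D/\sqrt{T})$ and hence $T=\mathcal{O}(\delta^{-4})$. One small slip: you call $C_t=16(\tau-8)L_{12}^2KQ\sqrt{t}/\beta$ a \emph{lower} bound on the descent coefficient, but $\kappa(t)\le KQ$ makes it an \emph{upper} bound on $\gamma_t$; for the telescoped inequality you need $\kappa(t)\ge K$ (Assumption \ref{Q-asynchronous}) to lower-bound the coefficient, and $\kappa(t)\le KQ$ only later when amplifying by $\eta_t^2 KQ/\gamma_t$. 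This is exactly why the paper's $d_2$ mixes $K$ and $KQ$; it changes your constants but not the rate, so the claim that your bookkeeping reproduces the stated $d_1,d_2,D$ exactly is optimistic, though the $\mathcal{O}(\delta^{-4})$ conclusion is sound.
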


\begin{proof}[Proof of Theorem \ref{thm:convergence-expanded}]\let\qed\relax
From $\beta \geq L_{\lambda}$ and $c_t = ({\beta}{{t}^{-1/4}})/2$, $t\geq 1$, one verifies that the conditions in \eqref{condition:beta} and \eqref{condition:c} are satisfied. For $t\geq 1$, we define	
$
			\gamma_t  = {(4\tau-32)\beta L_{12}^2\kappa(t)}/{c_t^2}
			$ with some $\tau >8$ and let
			\begin{equation*}
						\iota_t = \gamma_t + \frac{L_{12}^2\kappa(t)}{2\beta}+\frac{32L_{12}^2\beta \kappa(t)}{c_t^2}.
			\end{equation*}
Note that $\gamma_t \geq 0$ when the conditions on $\beta$, $c_t$, and $\eta_t$ hold.
Upon using Lemma \ref{lem:Lyapunuov_func}, we have
	\begin{equation}\label{modified_Lyapunuov_func}
		\begin{split}
		&	 -{\gamma}_t \sum_{\tau=0}^{\kappa(t)-1}\left\lVert  { \theta}^{(t,\tau+1)} - { \theta}^{(t,\tau)}\right\rVert^2   + 4\beta \left( \frac{c_{t-2}}{c_{t-1}}-\frac{c_{t-1}}{c_t}  \right) \left\lVert  { \lambda}^{(t)}\right\rVert^2 \\
			&  + \frac{c_{t-1}-c_t}{2}\left\lVert  { \lambda}^{(t+1)} \right\rVert^2 - \frac{2\beta}{5}  \left\lVert  { \lambda}^{(t+1)} -  { \lambda}^{(t)}  \right\rVert^2  \\
			& \geq F^{(t+1)} - F^{(t)}.
		\end{split}
	\end{equation}
	Let
	\begin{small}
	\begin{equation*}
		\nabla \tilde{G}( { \theta}^{(t)},  { \lambda}^{(t)})= \begin{pmatrix}
			{\eta_t}\left( { \theta}^{(t+1)} - { \theta}^{(t)}  \right)\\
			\frac{1}{\beta}\left(	 { \lambda}^{(t)} - \left[ { \lambda}^{(t)} + \beta\nabla_{ { \lambda}}\tilde{f}_{t-1}( { \theta}^{(t)},  { \lambda}^{(t)}) \right]_{+} \right)
		\end{pmatrix}.
	\end{equation*}
	\end{small}
Recall the definition for $	\nabla G( { \theta}^{(t)},  { \lambda}^{(t)})$ in \eqref{stationarity}.
It follows
	\begin{equation}\label{gradient-relation}
		\left\lVert  	\nabla G( { \theta}^{(t)},  { \lambda}^{(t)}) \right\rVert - 	\left\lVert  	\nabla \tilde{G}( { \theta}^{(t)},  { \lambda}^{(t)}) \right\rVert \leq c_{t-1} \left\lVert  { \lambda}^{(t)} \right\rVert.
	\end{equation}
	Due to 
$
		\left\lVert \left (\nabla \tilde{G}(\theta^{(t)},\lambda^{(t)} )\right)_{ { \theta}} \right\rVert =  {\eta_t}  \left\lVert { \theta}^{(t+1)} - { \theta}^{(t)}  \right\rVert
$
	and 
	\begin{equation*}
	\begin{split}
	&\beta \left\lVert   { \lambda}^{(t+1)} - { \lambda}^{(t)} \right\rVert + L_{12} \left\lVert  { \theta}^{(t+1)} - { \theta}^{(t)} \right\rVert  +(c_{t-1}-c_t) \left\lVert    { \lambda}^{(t)}\right\rVert \\
	& \geq \left\lVert \left (\nabla \tilde{G}(\theta^{(t)},\lambda^{(t)} ) \right)_{ { \lambda}} \right\rVert
		\end{split}
	\end{equation*}
	we have
	\begin{equation}\label{def:modified_stationarity}
		\begin{split}
		&	\left\lVert  \nabla \tilde{G} (\theta^{(t)}, \lambda^{(t)}) \right\rVert^2 \\
		&\leq \left(\eta_t^2+3L_{12}^2\right)\kappa(t)\sum_{\tau=0}^{\kappa(t)-1}  \left\lVert { \theta}^{(t,\tau+1)} - { \theta}^{(t,\tau)}  \right\rVert^2 \\
		&\quad +3(c_{t-1}-c_t)^2\left\lVert    { \lambda}^{(t)}\right\rVert^2 + 3 \beta^2 \left\lVert   { \lambda}^{(t+1)} - { \lambda}^{(t)} \right\rVert^2.
		\end{split}
	\end{equation}
	Since $\gamma_t$ and $\eta_t$ are in the same order, it follows from the definition of $d_1$ that
	\begin{equation}\label{def:d1}
	\begin{split}
		d_1 \geq \frac{\eta_t^2+3L_{12}^2}{\gamma_t^2}.
		\end{split}
	\end{equation}
%
Upon using \eqref{def:d1} and \eqref{def:modified_stationarity}, we obtain
	\begin{equation}\label{modified_stationarity}
		\begin{split}
		&	\left\lVert  \nabla \tilde{G} (\theta^{(t)}, \lambda^{(t)}) \right\rVert^2\\
		&	\leq d_1\gamma_t^2\kappa(t)\sum_{\tau=0}^{\kappa(t)-1}\left\lVert  { \theta}^{(t,\tau+1)} - { \theta}^{(t,\tau)} \right\rVert^2\\
		&\quad +3(c_{t-1}^2-c_t^2)\left\lVert    { \lambda}^{(t)}\right\rVert^2 +   3 \beta^2\left \lVert   { \lambda}^{(t+1)} - { \lambda}^{(t)} \right\rVert^2.
		\end{split}
	\end{equation}
Let $
		\rho_{t} ={1}/{(\max\{ d_1\gamma_tKQ , 15\beta/2 \})}.
$
By multiplying $\rho_t$ on both sides of \eqref{modified_stationarity} and using \eqref{modified_Lyapunuov_func}, we have
	\begin{equation*}
		\begin{split}
			&\rho_t		\left\lVert  \nabla \tilde{G} (\theta^{(t)}, \lambda^{(t)}) \right\rVert^2\\
			  &\leq  F^{(t)} - F^{(t+1)} + \frac{c_{t-1}-c_t}{2}\left\lVert  { \lambda}^{(t+1)}\right\rVert^2\\
			  &\quad +{3\rho_t(c_{t-1}^2-c_t^2)} \left \lVert    { \lambda}^{(t)}\right\rVert^2 + + 4\beta \left( \frac{c_{t-2}}{c_{t-1}}-\frac{c_{t-1}}{c_t}  \right) \left\lVert  { \lambda}^{(t)}\right\rVert^2\\
		&	\leq  F^{(t)} - F^{(t+1)} + \frac{c_{t-1}-c_t}{2}\sigma_{ { \lambda}}^2+ 4\beta \left( \frac{c_{t-2}}{c_{t-1}}-\frac{c_{t-1}}{c_t}  \right) \sigma_{ { \lambda}}^2 \\
		&\quad +{3\left(\rho_{t-1}c_{t-1}^2-\rho_tc_t^2\right)}  \sigma_{ { \lambda}}^2.
		\end{split}
	\end{equation*}
	Let
$
		\tilde{T}(\delta) = \min \left\{ t | t\geq 3, \lVert	\nabla \tilde{G}( { \theta}^{(t)},  { \lambda}^{(t)})  \rVert \leq \frac{\delta}{2}\right \}
$ and
$$\underline{F}= \min_{t\geq 3}\min_{( { \theta},  { \lambda} )} F^{(t)}.$$ By definition, we have
$
		\underline{F} \geq 	\underline{f} - {7\beta}\sigma_{ { \lambda}}^2-\left(   4\beta(2^{1/4}-1) + {\beta}\right)\sigma_{ { \lambda}}^2.
$
	It follows
	\begin{equation*}
		\begin{split}
		&	\sum_{t=3}^{	\tilde{T}(\delta) }\rho_t		\left\lVert  \nabla \tilde{G} (\theta^{(t)}, \lambda^{(t)}) \right\rVert^2  \\
		& \leq F^{(3)}- \underline{F} +4\beta \left( \frac{c_{1}}{c_{2}}-\frac{c_{	\tilde{T}(\delta)-1}}{c_{\tilde{T}(\delta)}}  \right)	\sigma_{ { \lambda}}^2 + \frac{c_2-c_{\tilde{T}(\delta)}}{2} \sigma_{ { \lambda}}^2 \\
		&\quad +  {3\left(\rho_{2}c_{2}^2-\rho_{\tilde{T}(\delta)}c_{\tilde{T}(\delta)}^2\right)} \sigma_{ { \lambda}}^2 \\
			&\leq F^{(3)}- \underline{F} + \frac{4\beta c_{1}}{c_{2}}\sigma_{ { \lambda}}^2 + \frac{c_2}{2} \sigma_{ { \lambda}}^2 + 3\rho_{2}c_{2}^2 \sigma_{ { \lambda}}^2  \\
			&\leq   F^{(3)}- \underline{f} + \frac{4\beta c_{1}}{c_{2}}\sigma_{ { \lambda}}^2 + \frac{c_2}{2} \sigma_{ { \lambda}}^2 + 3\rho_{2}c_{2}^2 \sigma_{ { \lambda}}^2 +{7\beta}\sigma_{ { \lambda}}^2\\
			& \quad +\left(   4\beta(2^{1/4}-1) + {\beta}\right)\sigma_{ { \lambda}}^2 : = D
		\end{split} 
	\end{equation*} 
By definition and $\kappa(t) \geq K$, we have $$d_2 \geq \max\left\{ d_1KQ, \frac{15\beta}{2\gamma_3}\right\}\geq \max\left\{ d_1K Q, \frac{15\beta}{2\gamma_t}\right\} = \frac{1}{\rho_t\gamma_t}.$$
Therefore,
	$\rho_t \geq {1}/{(d_2\gamma_t)}. $
	By the definition of $\tilde{T}(\delta)$, we have
	\begin{equation*}
		\frac{\delta^2}{4}\sum_{t=3}^{	\tilde{T}(\delta) }\frac{1}{\gamma_t} \leq \sum_{t=3}^{	\tilde{T}(\delta) }\frac{1}{\gamma_t} 	\left\lVert  \nabla \tilde{G} (\theta^{(t)}, \lambda^{(t)}) \right\rVert^2 \leq d_2D.
	\end{equation*}
Then
$
		{\delta^2}\leq {4d_2D}/{\left(\sum_{t=3}^{\tilde{T}(\delta)} {\gamma_t^{-1}}\right)}
$.
	By using 
$c_t = ({\beta}{{t}^{-1/4}})/2$, $
			\gamma_t  = {(4\tau-32)\beta L_{12}^2\kappa(t)}/{c_t^2}
			$, and $\kappa(t)\leq KQ$,
	we have
$$	\gamma_t \leq {16(\tau-8)L_{12}^2KQ\sqrt{t}}/{\beta}.
$$
	Since 
$
		\sum_{t=3}^{	\tilde{T}(\delta) }\frac{1}{\sqrt{t}} \geq \sqrt{\tilde{T}(\delta)}-2,
$
	we obtain
$
		\frac{\delta^2}{4}\leq \frac{16(\tau-8)L_{12}^2KQd_2D}{\beta\left(\sqrt{\tilde{T}(\delta)}-2\right)}
$
	and therefore
	\begin{equation*}
		\tilde{T}(\delta) \leq \left(  \frac{64(\tau-8)L_{12}^2KQd_2D}{\beta \delta^2} +2\right)^2.
	\end{equation*}
	In addition, if $
		t > 1+{ \beta^4 \sigma_{ { \lambda}}^4}/{\delta^4},
$
	then 
$
		c_{t-1} ={\beta}/{(2\sqrt[4]{t-1})}\leq {\delta}/{(2\sigma_{ { \lambda}})}.
$
	Therefore, according to \eqref{gradient-relation}, there exists a 
	\begin{equation*}
		\begin{split}
		&	T(\delta) \leq \max\left\{   \tilde{T}(\delta) , \frac{\beta^4\sigma_{\lambda}^4}{\delta^4} +1\right\} \\
			&\leq\max 
			\left\{     \left(  \frac{64(\tau-8)L_{12}^2KQd_2D}{\beta \delta^2} +2\right)^2,   \frac{\beta^4\sigma^4_{\lambda}}{\delta^4} +1    \right\}
		\end{split}
	\end{equation*}
	such that $
	\left\lVert  	\nabla G( { \theta}^{(t)},  { \lambda}^{(t)}) \right\rVert  \leq  	\left\lVert  	\nabla \tilde{G}( { \theta}^{(t)},  { \lambda}^{(t)}) \right\rVert + c_{t-1} \left\lVert  { \lambda}^{(t)} \right\rVert \leq \delta
	$. This completes the proof.
\end{proof}

\bibliographystyle{IEEEtran}
\bibliography{reference}

\end{document}